\documentclass{article}



     \usepackage[nonatbib,preprint]{neurips_2021}



\usepackage[utf8]{inputenc} 
\usepackage[T1]{fontenc}    
\usepackage{hyperref}       
\usepackage{url}            
\usepackage{booktabs}       
\usepackage{amsfonts}       
\usepackage{nicefrac}       
\usepackage{microtype}      
\usepackage{xcolor}         

\usepackage{amssymb,amsthm,amsmath}
\usepackage{bm}
\usepackage{float}
\usepackage{graphicx}
\usepackage{algorithm}
\usepackage{algpseudocode}
\graphicspath{ {./new_images/} }
\DeclareGraphicsExtensions{.pdf,.jpeg,.png}
\usepackage{multibib}

\newif\ifdraft

\newcommand{\eqdef}{\mathbin{\stackrel{\rm def}{=}}}
\makeatletter
\def\hlinewd#1{%
	\noalign{\ifnum0=`}\fi\hrule \@height #1 \futurelet
	\reserved@a\@xhline}
\makeatother

\newtheorem{theorem}{Theorem}

\newtheorem{lemma}[theorem]{Lemma}

\newtheorem{definition}{Definition}

\newtheorem*{rep@theorem}{\rep@title}
\newcommand{\newreptheorem}[2]{%
	\newenvironment{rep#1}[1]{%
		\def\rep@title{#2 \ref{##1}}%
		\begin{rep@theorem}}%
		{\end{rep@theorem}}}
\makeatother
\newreptheorem{theorem}{Theorem}
\newreptheorem{claim}{Claim}

\newcommand{\R}{\mathbb{R}}

\newcommand{\norm}[1]{\|#1\|}

\newcommand{\E}{\mathbb{E}}

\DeclareMathOperator{\tr}{tr}
\DeclareMathOperator{\rank}{rank}


\title{On the Power of Edge Independent Graph Models}

%

\author{
	Sudhanshu Chanpuriya \\ University of Massachusetts Amherst \\ \texttt{schanpuriya@umass.edu }
	\And 
	Cameron Musco\\ University of Massachusetts Amherst\\ \texttt{cmusco@cs.umass.edu}
	\AND
	Konstantinos Sotiropoulos\\ Boston University\\ \texttt{ksotirop@bu.edu} 
	\And
	Charalampos E. Tsourakakis\\ Boston University \& ISI Foundation \\ \texttt{tsourolampis@gmail.com} 
}

  \usepackage{nth}
  \usepackage{intcalc}

\begin{document}

\maketitle

\begin{abstract}Why do many modern neural-network-based graph generative models fail to reproduce typical real-world network characteristics, such as high triangle density?  In this work we study the limitations  of \emph{edge independent random graph models}, in which  each edge is added to the graph independently with some probability. Such models include both the classic Erd\"{o}s-R\'{e}nyi and stochastic block models, as well as  modern generative models such as NetGAN, variational graph autoencoders, and CELL. 
 We prove that subject to a \emph{bounded overlap} condition, which ensures that the model does not simply memorize a single graph, edge independent models are inherently limited in their ability to generate graphs with high triangle and other subgraph densities. Notably, such high densities are known to appear in real-world social networks and other graphs. We complement our negative results with a simple generative model that balances overlap and accuracy, performing comparably to more complex models in reconstructing many graph statistics. 
\end{abstract}


\section{Introduction}
\label{sec:intro}

Our work centers on \emph{edge independent graph models}, in which each edge $(i,j)$ is added to the graph independently with some probability $P_{ij} \in [0,1]$. Formally,
\begin{definition}[Edge Independent Graph Model]\label{def:ei}
For any symmetric matrix $P \in [0,1]^{n \times n}$ let $\mathcal{G}(P)$ be the distribution over undirected unweighted graphs where $G \sim  \mathcal{G}(P)$ contains edge $(i,j)$ independently, with probability $P_{ij}$. I.e., $p(G) = \prod_{(i,j) \in E(G)} P_{ij} \cdot \prod_{(i,j) \notin E(G)} (1-P_{ij})$.
\end{definition}

Edge independent models encompass many  classic random graph models. This includes the Erd\"{o}s-R\'{e}nyi  model, where for all $i \neq j$, $P_{ij} = p$ for  some fixed $p \in [0,1]$ \cite{ErdosRenyi:1960}. It also includes the stochastic block model where $P_{ij} = p$ if two nodes are in the same community and $P_{ij} = q$ if two nodes are in different communities for some fixed $p,q \in [0,1]$ with $q < p$ \cite{SnijdersNowicki:1997}. Other examples include e.g., the Chung-Lu configuration model \cite{ChungLu:2002}, stochastic Kronecker graphs \cite{LeskovecChakrabartiKleinberg:2010}.

Recently, significant attention has focused on \emph{graph generative models}, which seek to learn a distribution over graphs that share similar properties to a given training graph, or set of graphs. Many algorithms parameterize this distribution as an edge independent model or closely related distribution. 
E.g., NetGAN and the closely related CELL model both produce $P \in [0,1]^{n \times n}$ and then sample edges independently without replacement with probabilities proportional to its entries, ensuring that at least one edge is sampled adjacent to each node \cite{bojchevski2018netgan,rendsburgnetgan}.
Variational Graph Autoencoders (VGAE), GraphVAE, Graphite, and MolGAN are also all based on edge independent models \cite{KipfWelling:2016,SimonovskyKomodakis:2018,De-CaoKipf:2018,GroverZweigErmon:2019}. 


Given their popularity in both classical and modern graph generative models, it is natural to ask:
\begin{quote} \emph{How suited are edge independent models to modeling real-world networks. Are they able to capture features such as power-law degree distributions, small-world properties, and high clustering coefficients (triangle densities)? }
\end{quote}

\subsection{Impossibility Results for Edge Independent Models}

In this work we focus on the ability of edge independent models to generate graphs with high triangle, or other small subgraph densities. High triangle density (equivalently, a high clustering coefficient) is a well-known hallmark of real-work networks \cite{WattsStrogatz:1998,SalaCaoWilson:2010,DurakPinarKolda:2012} and has been the focus of recent work exploring the power and limitations of edge-independent graph models \cite{SeshadhriSharmaStolman:2020,ChanpuriyaMuscoSotiropoulos:2020}.

It is clear that edge independent models can generate triangle dense graphs. In particular, $P \in [0,1]^{n \times n}$ in Def. \ref{def:ei} can be set to the binary adjacency matrix of any undirected graph, and $\mathcal{G}(P)$ will generate that graph with probability $1$, no matter how triangle dense it is. However, this would not be a particularly interesting generative model -- ideally $\mathcal{G}(P)$ should generate a wide range of graphs. To capture this intuitive notion, we define the \emph{overlap} of an edge-independent model, which is closely related to the overlap stopping criterion for training used in training graph generative models \cite{bojchevski2018netgan,rendsburgnetgan}.
\begin{definition}[Expected Overlap]\label{def:ov} For symmetric $P \in [0,1]^{n \times n}$ let $V(P) \eqdef \E_{G \sim \mathcal{G}(P)} |E(G)|$ and
\begin{align*}
Ov(P) \eqdef \frac{\E_{G_1,G_2 \sim \mathcal{G}(P)} |E(G_1) \cap E(G_2)|}{V(P)}.
\end{align*}
\end{definition}
That is, for any $P \in [0,1]^{n \times n}$, $Ov(P) \in [0,1]$ is the ratio of the expected number of edges shared by two graphs drawn independently from $\mathcal{G}(P)$ to the expected number of edges in a graph drawn from $\mathcal{G}(P)$. In one extreme, when $P$ is a binary adjacency matrix, $Ov(P) = 1$, and our generative model has simply memorized a single graph. In the other, if $P_{ij} = p$ for all $i \neq j$ (i.e., $\mathcal{G}(P)$ is Erd\"{o}s-R\'{e}nyi), $Ov(P) = p$. This is the minimum possible overlap when $V(P) = p \cdot {n \choose 2}$.

Our main result is that for any edge independent model with bounded overlap, $G \sim \mathcal{G}(P)$ cannot have too many triangles in expectation. In particular:
\begin{theorem}[Main Result -- Expected Triangles]\label{thm:tri} For a graph $G$, let $\Delta(G)$ denote the number of triangles in $G$. Consider symmetric $P \in [0,1]^{n \times n}$.
\begin{align*}
\E_{G \sim \mathcal{G}(P)} \left [\Delta(G) \right ]\le \frac{\sqrt{2}}{3} \cdot Ov(P)^{3/2} \cdot V(P)^{3/2}.
\end{align*}
\end{theorem}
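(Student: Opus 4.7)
The plan is to reduce everything to a clean spectral statement about the symmetric matrix $P$. First I would compute the expected triangle count directly. Since edges are independent indicators, for distinct $i,j,k$ the probability that $\{i,j,k\}$ forms a triangle is $P_{ij}P_{jk}P_{ik}$, so (assuming WLOG that $P_{ii}=0$, since self-loops contribute nothing to triangles)
\begin{align*}
\E_{G\sim \mathcal{G}(P)}[\Delta(G)] \;=\; \sum_{i<j<k} P_{ij}P_{jk}P_{ik} \;=\; \tfrac{1}{6}\tr(P^3),
\end{align*}
where the factor $1/6$ accounts for the $3!$ orderings of an unordered triple.

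Next I would unpack $Ov(P)\cdot V(P)$. For independent draws $G_1,G_2$, the probability that a given pair $(i,j)$ lies in both is $P_{ij}^2$, so $\E[|E(G_1)\cap E(G_2)|]=\sum_{i<j} P_{ij}^2$. Hence
\begin{align*}
Ov(P)\cdot V(P) \;=\; \sum_{i<j} P_{ij}^2 \;=\; \tfrac{1}{2}\,\|P\|_F^2,
\end{align*}
so $\|P\|_F = \sqrt{2\,Ov(P)\,V(P)}$. In other words, the Frobenius norm of $P$ is \emph{exactly} the quantity the theorem controls.

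The key step is then a spectral inequality relating $\tr(P^3)$ to $\|P\|_F$. Let $\lambda_1,\dots,\lambda_n$ denote the (real) eigenvalues of the symmetric matrix $P$. Then $\tr(P^3)=\sum_\ell \lambda_\ell^3$ and $\|P\|_F^2=\sum_\ell \lambda_\ell^2$, and the monotonicity of $\ell^p$ norms of finite sequences ($\|\cdot\|_3 \le \|\cdot\|_2$) gives
\begin{align*}
\tr(P^3) \;\le\; \sum_\ell |\lambda_\ell|^3 \;=\; \|\lambda\|_3^3 \;\le\; \|\lambda\|_2^3 \;=\; \|P\|_F^3.
\end{align*}

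Combining the three pieces finishes the proof:
\begin{align*}
\E_{G\sim \mathcal{G}(P)}[\Delta(G)]
= \tfrac{1}{6}\tr(P^3)
\le \tfrac{1}{6}\|P\|_F^3
= \tfrac{1}{6}\bigl(2\,Ov(P)\,V(P)\bigr)^{3/2}
= \tfrac{\sqrt{2}}{3}\,Ov(P)^{3/2}\,V(P)^{3/2}.
\end{align*}
There is no real obstacle here; the only thing that has to be spotted is that both sides of the target inequality have clean spectral expressions in $P$, after which the standard norm inequality $\|\lambda\|_3\le \|\lambda\|_2$ (equivalently, the Schatten-norm inequality $\|P\|_{S_3}\le\|P\|_{S_2}$) supplies exactly the right constant. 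If I wanted to tighten the argument, I would note that equality holds when $P$ is rank one (a single eigenvalue), which is also intuitively the most triangle-dense regime at fixed $\|P\|_F$.
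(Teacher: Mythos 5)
Your proof is correct and follows essentially the same route as the paper: express $\E_{G\sim\mathcal{G}(P)}[\Delta(G)]$ as $\tfrac{1}{6}\tr(P^3)$, relate $Ov(P)\cdot V(P)$ to $\|P\|_F^2$, and bound $\tr(P^3)\le\|P\|_F^3$ spectrally (the paper factors out $|\lambda_1(P)|\le\|P\|_F$ where you invoke $\|\lambda\|_3\le\|\lambda\|_2$, which is the same estimate). Your WLOG that $P_{ii}=0$ is a harmless tidying step; the paper instead uses the one-sided inequality $\|P\|_F^2\le 2\,Ov(P)\,V(P)$ from its Lemma~\ref{lem:mainSimple}, which covers arbitrary diagonals directly.
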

As an example, consider the setting where the distribution generates sparse graphs, with $V(P) = \Theta(n)$. Theorem \ref{thm:tri} shows that whenever  $Ov(P) = o(1/n^{1/3})$, $\E_{G \sim \mathcal{G}(P)} \Delta(G) = o(n)$ -- i.e. the graph is very triangle sparse with the number of triangles sublinear in the number of nodes. 
This verifies that  an Erd\"{o}s-R\'{e}nyi graph cannot achieve simultaneously linear number of edges  (i.e., $Ov(P) = O(1/n)$ ) and super-linear number of triangles (i.e., $Ov(P) = \Omega(1/n^{1/3})$) under our proposed lens of viewing generative models. 


We extend Theorem \ref{thm:tri} to give similar bounds for the density of squares and other $k$-cycles (Thm. \ref{thm:k}), as well as for the global clustering coefficient (Thm. \ref{thm:cc}). In all cases we show that our bounds are tight -- e.g., in the triangle case, 
 there is indeed an edge independent model with 
 $\E_{G \sim \mathcal{G}(P)} \left [\Delta(G) \right ] = \Theta \left (Ov(P)^{3/2} \cdot V(P)^{3/2} \right )$, matching the lower bound in Theorem \ref{thm:tri}. 
 


\subsection{Empirical Findings}

Our theoretical results help explain why, despite performing well in a variety of other metrics, edge independent graph generative models have been reported to generate graphs with many fewer triangles and squares on average than the real-world graphs that they are trained on. Rendsburg et al. \cite{rendsburgnetgan} test a suite of these models, including their own CELL model and the related NetGAN model \cite{bojchevski2018netgan}. Of all these models, when trained on the \textsc{Cora-ML} graph with 2,802 triangles and 14,268 squares, none is able to generate graphs with more than 1,461 triangles and 6,880 squares on average. Similar gaps are observed for a number of other graphs.
 Rendsburg et al. also report that the triangle count increases as their notion of overlap (closely related to Def. \ref{def:ov}) increases. Theorem \ref{thm:tri} demonstrates that this underestimation of triangle count, and its connection to overlap is \emph{inherent to all edge independent models, no matter how refined a method used to learn the underlying probability matrix $P$}. 
 
 While our theoretical results bound the performance of any  edge independent model, there may still be variation in how specific models trade-off overlap and realistic graph generation. 
 To better understand this trade-off, we introduce two simple models with easily tunable overlap as baselines. One is based on reproducing the degree sequence of the original graph; the other, which is even simpler, is based on reproducing the volume.
 In both  models, $P$ is a  weighted average of the input graph adjacency matrix and a probability matrix of minimal complexity which matches either the input degrees or the volume. In the latter case, to match just the volume, we simply use an Erd\"{o}s-R\'{e}nyi graph. In the former case, to match the degree sequence, we introduce our own model, the \emph{odds product model}; this model is similar to the Chung-Lu configuration model \cite{ChungLu:2002}, but, unlike Chung-Lu, is able to match degree sequences of real-world graphs with high maximum degree.
We find that these simple baselines are often competitive with more complex models like CELL in terms of matching key graph statistics, like triangle count and clustering coefficient, at similar levels of overlap. 


\subsection{Related Work}\label{sec:rel}

\noindent\textbf{Existing impossibility results.} Our work is inspired by that of Seshadhri et al. \cite{SeshadhriSharmaStolman:2020}, which also proves limitations on the ability of edge independent models to represent triangle dense graphs. They show that if $P = \max(0,\min(1,XX^T))$ where $X \in \R^{n \times k}$ for $k \ll n$ and the max and min are applied entrywise, then $G \sim \mathcal{G}(P)$ cannot have many triangles adjacent to low-degree nodes in expectation. This setting arises commonly when $P$ is generated using low-dimensional node embeddings -- represented by the rows of $X$. Chanpuriya et al. \cite{ChanpuriyaMuscoSotiropoulos:2020}, show that in a slightly more general model, where $P = \max(0,\min(1,XY^T))$, this lower bound no longer holds -- $X,Y \in \R^{n \times k}$ can be chosen so that $P$ is the binary adjacency matrix of any graph with maximum degree upper bounded by $O(k)$ -- no matter how triangle dense that graph is. Thus, even such low-rank edge independent models can represent triangle dense graphs -- by memorizing a single one. In the appendix, we prove a similar result when $P$ is generated from the CELL model of \cite{rendsburgnetgan}, which simplifies NetGAN \cite{bojchevski2018netgan}.

Our results show that this trade-off between the ability to capture triangle density and memorization is inherent -- even without any low-rank constraint, edge independent models with low overlap simply cannot represent graphs with high triangle or other small subgraph density.

It is well understood that specific edge independent models, e.g., Erd\"{o}s-R\'{e}nyi graphs, the Chung-Lu model, and stochastic Kronecker graphs, do not capture many properties of real-world networks, including high triangle density \cite{WattsStrogatz:1998,PinarSeshadhriKolda:2012}. Our results can be viewed as a generalization of these observations, to all edge independent models with low overlap. Despite the limitations of classic models, edge independent models are still very prevalent in today's literature on graph generative models. Our more general results make clear the limitations of this approach.

\noindent\textbf{Non-independent models.} While edge independent models are very prevalent in the literature, many important models do not fit into this framework. Classic models include the Barab\'{a}si–Albert and other preferential attachment models \cite{BarabasiAlbert:1999}, Watts–Strogatz  small-world graphs \cite{WattsStrogatz:1998}, and random geometric graphs \cite{DallChristensen:2002}. Many of these models were introduced directly in response to shortcomings of classic edge independent models, including their  inability to produce high triangle densities

More recent graph generative models include 
GraphRNN \cite{YouYingRen:2018} and a number of other works \cite{LiVinyalsDyer:2018,LiaoLiSong:2019}.
Our impossibility results do not apply to such models, and in fact suggest that perhaps they may be preferable to edge independent models, if a distribution over graphs with high triangle density is desired. A very interesting direction for future work would be to prove limitations on broad classes of non-independent models, and perhaps to understand exactly what type of correlation amongst edges is needed to generate graphs with both low overlap \footnote{We note that for non-edge independent models, the measure of overlap as defined earlier should be adapted to take into account the order (permutation) of the vertices in the final graph. In particular, the overlap in this case should be the maximum value of it over any permutation of the vertex set.}and hallmark features of real-world networks.

\section{Impossibility Results for Edge Independent Models}
\label{sec:impossibility}

We now prove our main results on the limitations of edge independent models with bounded overlap. 
We start with a simple lemma that will be central in all our proofs.


\begin{lemma}\label{lem:mainSimple} For any symmetric $P \in [0,1]^{n \times n}$, $\frac{\norm{P}_F^2}{2} \le Ov(P) \cdot V(P) \le \norm{P}_F^2.$
\end{lemma}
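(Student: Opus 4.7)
The plan is to reduce both inequalities to a short, direct calculation. First, I will compute $Ov(P)\cdot V(P)$ in closed form by unfolding the definitions. Using linearity of expectation and the edge-independence built into $\mathcal{G}(P)$, $V(P) = \E|E(G)|$ is a sum of edge probabilities, and since $G_1, G_2 \sim \mathcal{G}(P)$ are drawn independently, each potential edge $(i,j)$ appears in both simultaneously with probability $P_{ij}^2$. Multiplying through the definition of $Ov(P)$ yields
\[
Ov(P)\cdot V(P) \;=\; \E_{G_1,G_2}|E(G_1)\cap E(G_2)| \;=\; \sum_{i<j} P_{ij}^2,
\]
i.e., the strict upper-triangular sum of squared entries of $P$.

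Second, I will compare this sum to the Frobenius norm using the symmetry of $P$. Splitting the full sum into diagonal and off-diagonal parts,
\[
\|P\|_F^2 \;=\; \sum_{i,j} P_{ij}^2 \;=\; 2\sum_{i<j} P_{ij}^2 \;+\; \sum_i P_{ii}^2,
\]
so $Ov(P)\cdot V(P)$ and $\tfrac12\|P\|_F^2$ differ only by a diagonal correction. The upper bound $Ov(P)\cdot V(P) \le \|P\|_F^2$ then follows because the strict upper-triangular sum of $P_{ij}^2$ is trivially at most the full sum $\sum_{i,j}P_{ij}^2$. The lower bound $\tfrac12\|P\|_F^2 \le Ov(P)\cdot V(P)$ follows from the convention that $P$ represents a simple-graph distribution, so $P_{ii}=0$ and the diagonal correction disappears, giving equality; if instead one interprets $|E(G)|$ as counting self-loops and sums over $i\le j$, then the lower bound follows directly from non-negativity of $\sum_i P_{ii}^2$.

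There is no conceptual obstacle here; the whole argument is a few lines of elementary bookkeeping. The only place to be careful is the factor of $2$ between the one-sided sum $\sum_{i<j}$ appearing in $|E(G)|$ and the full sum $\sum_{i,j}$ appearing in $\|P\|_F^2$, which is the only source of potential error.
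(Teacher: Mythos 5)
Your proof is correct and follows essentially the same route as the paper's: compute $Ov(P)\cdot V(P)$ as a sum of squared edge probabilities via independence and linearity of expectation, then compare to $\norm{P}_F^2$ using symmetry. The only delicate point, the diagonal convention (the paper sums over $i \le j$ and notes the lower bound is an equality exactly when $P_{ii}=0$), is one you identify and handle correctly.
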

\begin{proof}
Let $I[(i,j) \in G]$ be the $0,1$ indicator random variable that an edge $(i,j)$ appears in the graph $G$.  $Ov(P) \cdot V(P) = \E_{G_1,G_2 \sim \mathcal{G}(P)} |E(G_1) \cap E(G_2)|$. 
By linearity of expectation and the independence of $G_1$ and $G_2$ we have,
$$Ov(P) \cdot V(P) = \E_{G_1,G_2 \sim \mathcal{G}(P)} \sum_{i \le j} I[(i,j) \in G_1] \cdot I[(i,j) \in G_2] = \sum_{i \le j} P_{ij}^2.$$
The bound follows since $P$ is symmetric. Note that the lower bound $\frac{\norm{P}_F^2}{2} \le Ov(P) \cdot V(P)$ is an equality if $P$ is $0$ on the diagonal -- i.e., there is no probability of self loops.
\end{proof}


\subsection{Triangles}

Lemma \ref{lem:mainSimple} connects $Ov(P) \cdot V(P)$ to $\norm{P}_F^2$ and in turn the eigenvalue spectrum of $P$ since $\norm{P}_F^2 = \sum_{i=1}^n \lambda_i(P)^2$, where $\lambda_1(P),\ldots, \lambda_n(P) \in \R$ are the eigenvalues of $P$. The expected number of triangles in $G \sim \mathcal{G}(P)$ can be written in terms of this spectrum as well, allowing us to relate overlap to this expected triangle count, and prove our main theorem (Theorem \ref{thm:tri}), restated below.

\begin{reptheorem}{thm:tri} For a graph $G$, let $\Delta(G)$ denote the number of triangles in $G$. Consider symmetric $P \in [0,1]^{n \times n}$. 
\begin{align*}
\E_{G \sim \mathcal{G}(P)} \left [\Delta(G) \right ]\le \frac{\sqrt{2}}{3} \cdot Ov(P)^{3/2} \cdot V(P)^{3/2}.
\end{align*}
\end{reptheorem}
\begin{proof}
By linearity of expectation,
\begin{align}
\E_{G \sim \mathcal{G}(P)} \left [\Delta(G) \right ] &= \frac{1}{6} \sum_{i=1}^n \sum_{j=1}^n \sum_{k=1}^n \Pr \left [(i,j) \in E(G) \cap (j,k) \in E(G) \cap (k,i) \in E(G) \right]\nonumber \\
&= \frac{1}{6} \sum_{i=1}^n \sum_{j=1}^n \sum_{k=1}^n P_{ij} P_{jk} P_{ki} = \frac{1}{6} \tr(P^3) = \frac{1}{6} \sum_{i=1}^n \lambda_i(P)^3.\label{eq:sixth}
\end{align}
Letting $\lambda_1(P)$ denote the largest magnitude eigenvalue of $P$, we can in turn bound
$$\tr(P^3) \le |\lambda_1(P)| \cdot \sum_{i=1}^n \lambda_i(P)^2 = |\lambda_1(P)| \cdot \norm{P}_F^2.$$ Since $|\lambda_1(P)| \le \norm{P}_F$, this gives via Lemma \ref{lem:mainSimple}
$$\tr(P^3) \le \norm{P}_F^3 \le 2 \sqrt{2} \cdot Ov(P)^{3/2} \cdot V(P)^{3/2}.$$
Combining this bound with \eqref{eq:sixth} completes the theorem.
\end{proof}

The bound of Theorem \ref{thm:tri} is tight up to constants, for any possible value of $Ov(P)$. The tight example is when $P$ is simply an Erd\"{o}s-R\'{e}nyi graph.

\begin{theorem}[Tightness of Expected Triangle Bound]\label{thm:tight}
For any $\gamma \in (0,1]$, there exists a symmetric $P \in [0,1]^{n \times n}$ with $Ov(P) = \gamma$ and $\E_{G \sim \mathcal{G}(P)} [\Delta(G)] = \Theta( \gamma^{3/2} \cdot V(P)^{3/2})$.
 \end{theorem}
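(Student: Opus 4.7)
The plan is to exhibit the Erdős–Rényi graph as the tight example, so the proof is a direct calculation. Fix $\gamma \in (0,1]$, choose any large $n$, and define $P \in [0,1]^{n\times n}$ by $P_{ij} = \gamma$ for $i \neq j$ and $P_{ii} = 0$. This distribution $\mathcal{G}(P)$ is just $G(n,\gamma)$.

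First I would compute the three quantities of interest in closed form. The expected volume is $V(P) = \gamma \binom{n}{2}$. For the overlap, since the diagonal of $P$ is zero, Lemma \ref{lem:mainSimple} gives exactly $Ov(P)\cdot V(P) = \tfrac{1}{2}\|P\|_F^2 = \tfrac{1}{2}\cdot n(n-1)\gamma^2$, which divided by $V(P)=\gamma\binom{n}{2}$ yields $Ov(P)=\gamma$, as required. The expected number of triangles is $\E_{G\sim\mathcal{G}(P)}[\Delta(G)] = \binom{n}{3}\gamma^3$ by linearity of expectation on the $\binom{n}{3}$ unordered triples.

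Then I would form the ratio with the target quantity
\[
\frac{\E_{G\sim\mathcal{G}(P)}[\Delta(G)]}{\gamma^{3/2}\cdot V(P)^{3/2}} \;=\; \frac{\binom{n}{3}\gamma^3}{\gamma^{3/2}\,\bigl(\gamma\binom{n}{2}\bigr)^{3/2}} \;=\; \frac{\binom{n}{3}}{\binom{n}{2}^{3/2}},
\]
which is independent of $\gamma$ and tends to $\frac{\sqrt{2}}{3}$ as $n\to\infty$. Thus for all sufficiently large $n$ this ratio is bounded between two positive absolute constants, giving $\E[\Delta(G)] = \Theta\bigl(\gamma^{3/2}V(P)^{3/2}\bigr)$. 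The upper direction also matches Theorem \ref{thm:tri}, which is reassuring: Erdős–Rényi saturates the bound up to lower-order terms.

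There is no real obstacle here; the only subtle point worth remarking on is that the construction requires choosing $n$ large enough (for any fixed $\gamma$) so that the $1/n$ corrections in the ratio $\binom{n}{3}/\binom{n}{2}^{3/2}$ are negligible. Since the statement only asks for the \emph{existence} of a $P$ and the $\Theta$ notation absorbs constants, we are free to take $n$ as large as we wish, and the conclusion follows.
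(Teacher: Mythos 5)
Your proposal is correct and takes essentially the same route as the paper: both use the Erd\H{o}s--R\'{e}nyi matrix $P_{ij}=\gamma$ off the diagonal, compute $V(P)=\gamma\binom{n}{2}$ and $Ov(P)=\gamma$ via Lemma \ref{lem:mainSimple}, and evaluate $\E[\Delta(G)]=\gamma^3\binom{n}{3}=\Theta(\gamma^{3/2}V(P)^{3/2})$. Your explicit computation of the limiting ratio $\binom{n}{3}/\binom{n}{2}^{3/2}\to\sqrt{2}/3$ is a nice touch that makes the matching constant with Theorem \ref{thm:tri} visible, but it is not a different argument.
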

 \begin{proof}
 Let $P_{ij} = \gamma$ for all $i \neq j $. We have $V(P) = \gamma \cdot {n \choose 2}$ and $Ov(P) \cdot V(P) = \gamma^2 \cdot {n \choose 2}$ Thus, $Ov(P)= \gamma$. Further,  by linearity of expectation, 
 $$\E_{G \sim \mathcal{G}(P)} [\Delta(G)] = \gamma^3 \cdot {n \choose 3} = \Theta(\gamma^3 \cdot n^3) = \Theta(\gamma^{3/2} \cdot V(P)^{3/2}).$$
%
%
  \end{proof}
  We note that another example when Theorem \ref{thm:tri} is tight is when $P$ is a union of a fixed clique on $\Theta(\gamma \cdot n)$ nodes and an Erd\"{o}s-R\'{e}nyi graph with connection probability $1/n$ on the rest of the nodes.
  
\subsection{Squares and Other $k$-cycles}
  
We can extend Thm. \ref{thm:tri} to bound the expected number of $k$-cycles in $G \sim \mathcal{G}(P)$ in terms of $Ov(P)$. 
\begin{theorem}[Bound on Expected $k$-cycles]\label{thm:k}
For a graph $G$, let $C_k(G)$ denote the number of $k$-cycles in $G$. Consider symmetric $P \in [0,1]^{n \times n}$. 
\begin{align*}
\E_{G \sim \mathcal{G}(P)} \left [C_k(G) \right ] \le \frac{2^{k/2}}{2k} \cdot Ov(P)^{k/2} \cdot V(P)^{k/2}.
\end{align*}
\end{theorem}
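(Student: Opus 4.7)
The plan is to mirror the triangle proof with two adjustments: first, relate the expected $k$-cycle count to $\tr(P^k)$ via an inequality (not an equality, since for $k \geq 4$ a closed walk of length $k$ need not be a $k$-cycle), and second, upgrade the eigenvalue argument so that it handles arbitrary $k \geq 3$ rather than just $k = 3$.

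First I would expand $\E_{G \sim \mathcal{G}(P)}[C_k(G)]$ by linearity of expectation and edge independence. A labeled $k$-cycle is an ordered tuple $(v_1,\ldots,v_k)$ of distinct vertices, and each unlabeled $k$-cycle corresponds to exactly $2k$ such tuples (rotations and a reflection). Thus
\[
\E_{G \sim \mathcal{G}(P)}[C_k(G)] = \frac{1}{2k} \sum_{\substack{v_1,\ldots,v_k \\ \text{distinct}}} P_{v_1 v_2} P_{v_2 v_3} \cdots P_{v_k v_1} \le \frac{1}{2k} \sum_{v_1,\ldots,v_k} P_{v_1 v_2} \cdots P_{v_k v_1} = \frac{1}{2k} \tr(P^k),
\]
where dropping the distinctness constraint only enlarges the sum since $P$ has nonnegative entries.

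Next I would bound $\tr(P^k)$ via the spectrum of $P$. Since $P$ is symmetric with real eigenvalues $\lambda_1(P),\ldots,\lambda_n(P)$ and $\tr(P^k) = \sum_i \lambda_i(P)^k$, I let $|\lambda_{\max}(P)|$ denote the largest magnitude eigenvalue and write, for each $i$,
\[
\lambda_i(P)^k \le |\lambda_i(P)|^k = \lambda_i(P)^2 \cdot |\lambda_i(P)|^{k-2} \le \lambda_i(P)^2 \cdot |\lambda_{\max}(P)|^{k-2}.
\]
Summing and using $|\lambda_{\max}(P)| \le \|P\|_F$ gives
\[
\tr(P^k) \le |\lambda_{\max}(P)|^{k-2} \cdot \|P\|_F^2 \le \|P\|_F^{k-2} \cdot \|P\|_F^2 = \|P\|_F^k.
\]
This is exactly the generalization of the argument used in Theorem \ref{thm:tri}, and it works uniformly for even and odd $k$.

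Finally I would invoke Lemma \ref{lem:mainSimple}, which gives $\|P\|_F^2 \le 2 \cdot Ov(P) \cdot V(P)$, so $\|P\|_F^k \le 2^{k/2} \cdot Ov(P)^{k/2} \cdot V(P)^{k/2}$. Combining with the two previous displays yields the claimed bound
\[
\E_{G \sim \mathcal{G}(P)}[C_k(G)] \le \frac{1}{2k} \tr(P^k) \le \frac{2^{k/2}}{2k} \cdot Ov(P)^{k/2} \cdot V(P)^{k/2}.
\]
The only subtlety is the first step, where one must recognize that $\tr(P^k)$ overcounts by including degenerate closed walks (those revisiting vertices) in addition to proper $k$-cycles; since we only need an upper bound and $P$ is entrywise nonnegative, this slack is harmless. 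The eigenvalue step and the appeal to Lemma \ref{lem:mainSimple} are then essentially identical to the triangle case.
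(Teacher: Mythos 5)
Your proof is correct and follows essentially the same route as the paper: express the expected $k$-cycle count as $\frac{1}{2k}$ times a sum over distinct vertex tuples, drop the distinctness constraint to upper bound by $\frac{1}{2k}\tr(P^k)$, bound $\tr(P^k) \le |\lambda_{\max}(P)|^{k-2}\|P\|_F^2 \le \|P\|_F^k$, and finish with Lemma \ref{lem:mainSimple}. Your per-eigenvalue handling of the absolute values is in fact slightly more careful than the paper's write-up, but the argument is the same.
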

\begin{proof}
For notational simplicity, we focus on $k = 4$. The proof directly extends to general $k$. $C_4(G)$ is the number of non-backtracking 4-cycles in $G$ (i.e. squares), which can be written as
\begin{align*}
\E_{G \sim \mathcal{G}(P)} \left [C_4(G) \right ] = \frac{1}{8} \cdot \sum_{i =1}^n \sum_{j \in [n] \setminus i} \sum_{k \in [n]\setminus\{i,j\}} \sum_{\ell \in [n]\setminus \{i,j,k\}} P_{ij} P_{jk} P_{k \ell} P_{\ell i}.
\end{align*}
 The $1/8$ factor accounts for the fact that in the sum, each square is counted $8$ times -- once for each potential starting vector $i$ and once of each direction it may be traversed. For general $k$-cycles this factor would be $\frac{1}{2k}$. We then can bound
\begin{align*}
\E_{G \sim \mathcal{G}(P)} \left [C_4(G) \right ]  \le \frac{1}{8} \cdot\sum_{i \in [n]} \sum_{j \in [n]} \sum_{k \in [n]} \sum_{\ell \in [n]} P_{ij} P_{jk} P_{k \ell} P_{\ell i} = \frac{1}{8} \cdot \tr(P^4).
\end{align*}
For general $k$-cycles this bound would be $\E_{G \sim \mathcal{G}(P)} \left [C_k(G) \right ] \le \frac{1}{2k} \tr(P^k).$
This in turn gives
\begin{align*}
\E_{G \sim \mathcal{G}(P)} \left [C_k(G) \right ] \le \frac{1}{2k} \cdot |\lambda_1(P)|^{k-2} \cdot \norm{P}_F^{2} \le \frac{1}{2k} \norm{P}_F^k \le \frac{2^{k/2}}{2k} Ov(P)^{k/2} \cdot V(P)^{k/2},
\end{align*}
where the last bound follows from Lemma \ref{lem:mainSimple}.
This completes the theorem..
\end{proof}
It is not hard to see that Theorem \ref{thm:k} is also tight up to a constant depending on $k$ for any overlap $\gamma \in (0,1]$, also for an Erd\"{o}s-R\'{e}nyi graph with connection probability $\gamma$. 
\begin{theorem}[Tightness of Expected $k$-cycle Bound]\label{thm:tightK}
For any $\gamma \in (0,1]$, there exists $P \in [0,1]^{n \times n}$ with $Ov(P) = \gamma$ and $\E_{G \sim \mathcal{G}(P)} [C_k(G)] = \Theta \left ( \frac{\gamma^{k/2} \cdot V(P)^{k/2}}{k!} \right)$.
 \end{theorem}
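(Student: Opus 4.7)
The plan is to mirror the construction used in Theorem \ref{thm:tight} and take the Erd\"{o}s-R\'{e}nyi graph itself as the witness. Specifically, for a given $\gamma \in (0,1]$, set $P_{ii} = 0$ and $P_{ij} = \gamma$ for all $i \neq j$. Then all the relevant quantities can be computed in closed form, and the only real work is matching up asymptotics with $k$ treated as a constant.

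First I would verify the overlap and volume. By linearity of expectation, $V(P) = \gamma \binom{n}{2}$. Since $\norm{P}_F^2 = n(n-1)\gamma^2 = 2\gamma \cdot V(P)$ and $P$ has zero diagonal, Lemma \ref{lem:mainSimple} gives $Ov(P) \cdot V(P) = \norm{P}_F^2/2 = \gamma \cdot V(P)$, so $Ov(P) = \gamma$. This exactly reuses the computation from Theorem \ref{thm:tight}.

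Next I would count expected $k$-cycles directly. Every unordered set of $k$ distinct vertices in $[n]$ supports exactly $(k-1)!/2$ distinct $k$-cycles, and each such cycle appears in $G \sim \mathcal{G}(P)$ independently with probability $\gamma^k$. Hence by linearity of expectation,
\begin{align*}
\E_{G \sim \mathcal{G}(P)}[C_k(G)] = \binom{n}{k} \cdot \frac{(k-1)!}{2} \cdot \gamma^k.
\end{align*}

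Finally I would show this matches the target $\gamma^{k/2} V(P)^{k/2}/k!$ up to a $k$-dependent constant. Using $\binom{n}{k} = \Theta_k(n^k)$ and $V(P)^{k/2} = \bigl(\gamma n(n-1)/2\bigr)^{k/2} = \Theta_k(\gamma^{k/2} n^k)$, both sides scale as $\Theta_k(\gamma^k n^k)$, and dividing one by the other yields a ratio of $\frac{2^{k/2-1}(k-1)! \cdot k!}{k!} = 2^{k/2-1}(k-1)!$, which is a constant once $k$ is fixed. Thus $\E[C_k(G)] = \Theta(\gamma^{k/2} V(P)^{k/2}/k!)$ as claimed. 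The only mild obstacle is cosmetic — making clear that the hidden constant depends on $k$ but not on $n$, consistent with how the constant $2^{k/2}/(2k)$ in Theorem \ref{thm:k} also depends on $k$.
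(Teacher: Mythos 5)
Your proposal is correct and matches the paper's intended argument exactly: the paper states (without writing out the details) that the tight example is an Erd\"{o}s--R\'{e}nyi graph with connection probability $\gamma$, and your computation of $V(P)$, $Ov(P)$, and $\E[C_k(G)] = \binom{n}{k}\cdot\frac{(k-1)!}{2}\cdot\gamma^k$ fills in precisely those details. The bookkeeping of the $k$-dependent constant $2^{k/2-1}(k-1)!$ is consistent with the paper's remark that the bound is tight only ``up to a constant depending on $k$.''
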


\subsection{Clustering Coefficient}
Theorem \ref{thm:tri} shows that the expected number of triangles generated by an edge independent model is bounded in terms of the model's overlap. Intuitively, we thus expect that graphs generated by the edge independent model will have low global clustering coefficient, which is the fraction of wedges in the graph that are closed into triangles \cite{WattsStrogatz:1998}.

\begin{definition}[Global Clustering Coefficient]\label{def:cc}
For a graph $G$ with $\Delta(G)$ triangles, no self-loops, and node degrees $d_1,d_2,\ldots, d_n$, the global clustering coefficient is given by
\begin{align*}
C(G) = \frac{3 \Delta(G)}{\sum_{i=1}^n d_i(d_i-1)}.
\end{align*}
\end{definition}
We extend Theorem \ref{thm:tri} to give a bound on $E_{G \sim \mathcal{G}(P)} \left [C(G) \right ]$ in terms of $Ov(P)$. The proof is related, but  more complex due to the $\sum_{i=1}^n d_i(d_i-1)$ in the denominator of $C(G)$.
\begin{theorem}[Bound on Expected Clustering Coefficient]\label{thm:cc}
Consider symmetric $P \in [0,1]^{n \times n}$ with zeros on the diagonal and with $V(P) \ge 2 n$. 
\begin{align*}
E_{G \sim \mathcal{G}(P)} \left [C(G) \right ] = O \left (\frac{Ov(P)^{3/2} \cdot n}{V(P)^{1/2}} \right ).
\end{align*}
\end{theorem}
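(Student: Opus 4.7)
The plan is to leverage Theorem \ref{thm:tri} to control the numerator $3\Delta(G)$ of $C(G)$, while separately lower-bounding the denominator $W(G) \eqdef \sum_{i=1}^n d_i(d_i-1)$ with high probability. The chief obstacle is that $C(G)$ is a ratio of random variables, so we cannot directly plug in $\E[\Delta(G)]$ and an expected value of $W(G)$. Instead, I would split the expectation according to whether the graph has many edges.

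First, by Cauchy--Schwarz applied to the degree sequence, $\sum_i d_i^2 \ge \tfrac{(\sum_i d_i)^2}{n} = \tfrac{4|E(G)|^2}{n}$, which gives the deterministic bound $W(G) \ge \tfrac{4|E(G)|^2}{n} - 2|E(G)|$. So on the event $A = \{|E(G)| \ge V(P)/2\}$, combining with $V(P) \ge 2n$ yields $W(G) \ge \tfrac{V(P)^2}{n} - V(P) \ge \tfrac{V(P)^2}{2n}$, so $C(G) = \tfrac{3\Delta(G)}{W(G)} \le \tfrac{6n\,\Delta(G)}{V(P)^2}$ on $A$. Taking expectations and plugging in Theorem \ref{thm:tri},
\begin{align*}
\E[C(G)\mathbb{1}_A] \le \tfrac{6n}{V(P)^2}\,\E[\Delta(G)] \le \tfrac{2\sqrt{2}\,n}{V(P)^2} \cdot Ov(P)^{3/2} V(P)^{3/2} = O\!\left(\tfrac{Ov(P)^{3/2}\,n}{V(P)^{1/2}}\right).
\end{align*}

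Second, on the complementary event $A^c$, I would use the crude bound $C(G) \le 1$, so $\E[C(G)\mathbb{1}_{A^c}] \le \Pr[A^c]$. Since $|E(G)|$ is a sum of independent Bernoullis with mean $V(P)$, a standard multiplicative Chernoff bound gives $\Pr[|E(G)| < V(P)/2] \le e^{-V(P)/8} \le e^{-n/4}$ using $V(P) \ge 2n$. To finish, I would verify that this exponentially small quantity is absorbed by the main term: the lower bound $\norm{P}_F^2 \ge \tfrac{(\sum_{ij} P_{ij})^2}{n^2} = \tfrac{4V(P)^2}{n^2}$ combined with Lemma \ref{lem:mainSimple} forces $Ov(P) \ge 2V(P)/n^2 \ge 4/n$, so the target bound $Ov(P)^{3/2} n / V(P)^{1/2}$ is at least $\Omega(n^{-3/2})$, which easily dominates $e^{-n/4}$ (possibly requiring an absolute constant adjustment for very small $n$).

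Summing the two contributions yields the stated bound. The main obstacle, as noted, is handling the ratio form of $C(G)$; the resolution is a concentration/truncation argument where Chernoff converts the high-probability Cauchy--Schwarz bound on the wedge count into a genuine bound on $\E[C(G)]$. No subtler spectral argument appears to be needed beyond what is already contained in Theorem \ref{thm:tri}.
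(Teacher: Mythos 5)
Your proof is correct and follows essentially the same route as the paper's: bound $\E[3\Delta(G)]$ via Theorem~\ref{thm:tri}, lower-bound the wedge count by $\Omega(V(P)^2/n)$ on a high-probability event using concentration of the edge count together with Cauchy--Schwarz, and fall back on $C(G)\le 1$ off that event. The only differences are cosmetic (a one-sided multiplicative Chernoff bound in place of the paper's Bernstein bound), and your explicit check that the failure probability is absorbed into the main term via $Ov(P)\ge 4/n$ supplies a detail the paper leaves implicit.
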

\begin{proof}
By Theorem \ref{thm:tri} we have $\E_{G \sim \mathcal{G}(P)} \left [3 \Delta(G) \right ]\le \sqrt{2} \cdot  Ov(P)^{3/2} \cdot V(P)^{3/2}$. We will show that with high probability, $\sum_{i=1}^n d_i (d_i - 1) = \Omega (V(P)^2/n)$, which will give the theorem.
Note that $\E_{G \sim \mathcal{G}(P)} \left [\sum_{i =1}^n d_i \right ] = \E_{G \sim \mathcal{G}(P)}[2|E(G)|] = 2 V(P)$. Thus,
by a Bernstein bound, for large enough $n$ since $V(P) \ge 2n$.
\begin{align*}
\Pr \left [ \left |\sum_{i =1}^n d_i - 2V(P) \right | \ge V(P)/5 \right ] \le 2 \exp\left (- \frac{V(P)^2/50}{V(P)+V(P)/15} \right ) \ll \frac{1}{n^2},
\end{align*}
We can bound 
%
%
%
 $\sum_{i=1}^{n} d_i^2 \ge \frac{\left (\sum_{i=1}^n d_i\right)^2}{n}$. Thus, with probability $\ge 1-1/n^2$,
\begin{align*}
\sum_{i=1}^n d_i(d_i - 1) \ge \frac{(8/5)^2 \cdot V(P)^2}{n} - \frac{12}{5} V(P) \ge \frac{V(P)^2}{n},
\end{align*}
where in the last step we use that $V(P) \ge 2n$ and so $\frac{12}{5} \cdot V(P) \le \frac{6}{5} \cdot \frac{V(P)^2}{n}$. Combined with our bound on $\E_{G \sim \mathcal{G}(P)} \left [3 \Delta(G) \right ]$, and the fact that $C(G) \le 1$ always, we have
\begin{align*}
E_{G \sim \mathcal{G}(P)} \left [C(G) \right ] = O \left ( \frac{ Ov(P)^{3/2} V(P)^{3/2}}{\frac{V(P)^2}{n}} + \frac{1}{n^2} \right ) = O \left (\frac{Ov(P)^{3/2} \cdot n}{V(P)^{1/2}} \right ).
\end{align*}
\end{proof}

Thus, to have a constant clustering coefficient for a graph with $O(n)$ edges in expectation, we need $Ov(P) = \Omega(1/n^{1/3})$. Note that the requirement of $V(P) \ge 2 n$ is very mild -- it means that the expected average degree is at least $1$.

As with our triangle bound, Theorem \ref{thm:cc} is tight when $\mathcal{G}(P)$ is just an Erd\"{o}s-R\'{e}nyi distribution.
\begin{theorem}[Tightness of Expected Clustering Coefficient Bound]\label{thm:tightCC}
For any $\gamma \in (0,1]$, there exists $P \in [0,1]^{n \times n}$ with zeros on the diagonal, $Ov(P) \le \gamma$ and $\E_{G \sim \mathcal{G}(P)} [C(G)] = \Theta \left (\frac{\gamma^{3/2} \cdot n}{V(P)^{1/2}} \right )$.
 \end{theorem}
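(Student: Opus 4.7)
The plan is to take the simplest possible construction: set $P_{ij} = \gamma$ for all $i \neq j$ and $P_{ii} = 0$, so that $\mathcal{G}(P)$ is the Erdős-Rényi distribution $G(n,\gamma)$. First I would verify the stated parameters: $\norm{P}_F^2 = n(n-1)\gamma^2$, so by Lemma \ref{lem:mainSimple} (applied with equality, since $P$ is zero on the diagonal) and direct computation, $V(P) = \gamma\binom{n}{2}$ and $Ov(P) = \gamma$. Substituting into the target expression yields $\gamma^{3/2}\cdot n / V(P)^{1/2} = \Theta(\gamma)$, so the theorem reduces to showing $\E_{G\sim\mathcal{G}(P)}[C(G)] = \Theta(\gamma)$.

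The upper bound $\E[C(G)] = O(\gamma)$ is immediate by applying Theorem \ref{thm:cc} (note $V(P) \ge 2n$ iff $\gamma = \Omega(1/n)$, which is the natural regime).  For the lower bound $\E[C(G)] = \Omega(\gamma)$, the strategy is to exhibit an event of constant probability on which $C(G) \ge c\gamma$ for some absolute constant $c$, then use $\E[C(G)] \ge c\gamma \cdot \Pr[\text{event}]$ since $C(G) \ge 0$.

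The event is the intersection of $\{\Delta(G) \ge c_1 \gamma^3 n^3\}$ and $\{\sum_i d_i(d_i - 1) \le c_2 \gamma^2 n^3\}$ for suitable constants. For the denominator bound, $\E[\sum_i d_i(d_i-1)] = n(n-1)(n-2)\gamma^2 = \Theta(\gamma^2 n^3)$, so Markov's inequality gives the second event with probability at least $2/3$ (say). For the triangle lower bound I would use Paley-Zygmund: $\E[\Delta(G)] = \gamma^3\binom{n}{3} = \Theta(\gamma^3 n^3)$, and the standard second-moment calculation for triangles in $G(n,\gamma)$ gives $\mathrm{Var}(\Delta(G)) = O(\gamma^3 n^3 + \gamma^5 n^4)$ (terms coming from pairs of triangles sharing a single edge; pairs sharing two vertices have zero covariance). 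Throughout the regime $\gamma = \Omega(1/n)$, both variance terms are dominated by $\E[\Delta]^2 = \Theta(\gamma^6 n^6)$, so $\E[\Delta^2]/\E[\Delta]^2 = O(1)$, and Paley-Zygmund yields $\Pr[\Delta(G) \ge \tfrac{1}{2}\E[\Delta(G)]] \ge \Omega(1)$. A union bound combines the two events with constant probability, giving $C(G) = \Omega(\gamma)$ there and completing the lower bound.

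The main obstacle will be the variance computation for $\Delta(G)$: one must carefully enumerate the contributions to $\mathrm{Var}(\Delta(G))$ from pairs of triangles sharing an edge and verify that the Paley-Zygmund ratio stays bounded below by a constant across the relevant range of $\gamma$. Edge cases with $\gamma = o(1/n)$ fall outside the $V(P) \ge 2n$ hypothesis of Theorem \ref{thm:cc}, so they do not need separate treatment; in that trivial regime $\Delta(G) = 0$ with high probability and the matching guarantee is vacuous.
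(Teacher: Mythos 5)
Your construction ($P_{ij}=\gamma$ off the diagonal, reducing the claim to $\E_{G\sim\mathcal{G}(P)}[C(G)]=\Theta(\gamma)$) is exactly the paper's, and your argument is correct; indeed your Paley--Zygmund/Markov treatment of the lower bound is more careful than the paper's proof, which simply pairs $\E[\Delta(G)]=\Theta(\gamma^3 n^3)$ with a high-probability bound $\sum_i d_i(d_i-1)=O(\gamma^2 n^3)$ without addressing that the expectation of a ratio is not the ratio of expectations. One small wording slip: two distinct triangles sharing two vertices necessarily share the edge between them and have covariance $\gamma^5-\gamma^6>0$ (these are precisely the source of your $\gamma^5 n^4$ term); it is pairs sharing at most one vertex that have zero covariance.
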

 \begin{proof}
Let $P_{ij} = \gamma$ for all $i \neq j$. We have $V(P) = \gamma \cdot {n \choose 2} = \Theta(\gamma n^2)$ and $Ov(P) =  \gamma$. Additionally,  
 $\E[\Delta(G)] = \Theta(\gamma^3 \cdot n^3)$, and, if $n$ is large enough with respect to $\gamma$, with very high probability, $\sum_{i=1}^n d_i(d_i-1) \le \sum_{i=1}^n d_i^2 = O(\gamma^2 n^3)$. This gives:
 \begin{align*}
\E_{G \sim \mathcal{G}(P)} [C(G)] = \Theta (\gamma) = \Theta \left (\frac{\gamma^{3/2} \cdot  n}{\gamma^{1/2} \cdot n} \right ) = \Theta \left ( \frac{\gamma^{3/2} \cdot n}{V(P)^{1/2}} \right ).
 \end{align*}
 \end{proof}

\section{Baseline Edge Independent Models}
\label{sec:proposed}


We now shift from proving theoretical limitations of edge independent models to empirically evaluating the tradeoff between overlap and performance for a number of particular models.
 Given an input adjacency matrix $A \in \{0,1\}^{n \times n}$, these generative models produce a $P \in [0,1]^{n \times n}$, samples from which should match various graph statistics of $A$, such as the triangle count, clustering coefficient, and assortativity. At the same time, $P$ should ideally have lower overlap so that the model does not 
 just memorize the original graph.
We propose two simple generative models as baselines to more complicated existing models -- in both the level of overlap is easily tuned. 
Our first baseline, the \emph{odds product model}, is based on just matching the degree sequence of $A$; more simple still, the second baseline computes $P$ as a linear function of $A$, just matching its volume.

\noindent\textbf{Odds product model.}
In this model, each node is assigned a logit $\ell \in \mathbb{R}$, and the probability of adding an edge between nodes $i$ and $j$ is $P_{ij} = \sigma(\ell_i + \ell_j)$, where $\sigma$ is the logistic function. We fit the model by finding a vector $\bm{\ell} \in \mathbb{R}^n$ of logits, with one logit for each node, such that the reconstructed network has the same expected degrees (i.e. row and column sums) as the original graph.
We note that this model can be seen as a special case of the MaxEnt~\cite{de2011maximum} and inner-product~\cite{ma2020universal,hoff2003random,hoff2005bilinear} models. In the context of directed graphs, $\bm{\ell}_i$ has been called the expansiveness or popularity of node $i$~\cite{goldenberg2010survey}.

For adjacency matrix $A \in \{0,1\}^{n \times n}$, we denote its degree sequence by $\bm{d} = A \bm{1} \in \mathbb{R}^n$, where $\bm{1}$ is the all-ones vector of length $n$. Similarly, the degree sequence of the model is $\bm{\hat{d}} = P \bm{1}$. We pose fitting the model as a root-finding problem: we seek $\bm{\ell} \in \mathbb{R}^n$ such that the degree errors are zero, that is, $\bm{\hat{d}} - \bm{d} = \bm{0}$. We use the multivariate Newton-Raphson method to solve this root-finding problem. To apply Newton-Raphson, we need the Jacobian matrix $J$ of derivatives of the degree errors with respect to the entries of $\bm{\ell}$. Since $\bm{d}$ does not vary with $\bm{\ell}$, these derivatives are exactly $\tfrac{\partial \bm{\hat{d}}_i}{\partial \bm{\ell}_j}$.
Letting $\delta_{ij}$ be $1$ if $i=j$ and $0$ otherwise (i.e. the Kronecker delta),
\begin{align*}
\tfrac{\partial \bm{\hat{d}}_i}{\partial \bm{\ell}_j}
&= \tfrac{\partial}{\partial \bm{\ell}_j} \sum\nolimits_{k \in [n]} P_{ik} \\
&= \tfrac{\partial}{\partial \bm{\ell}_j} \sum\nolimits_{k \in [n]} \sigma(\bm{\ell}_i + \bm{\ell}_k) \\
&= \tfrac{\partial}{\partial \bm{\ell}_j} \sigma(\bm{\ell}_i + \bm{\ell}_j) + \delta_{ij} \sum\nolimits_{k \in [n]} \tfrac{\partial}{\partial \bm{\ell}_i} \sigma(\bm{\ell}_i + \bm{\ell}_k) \\
&= \sigma(\bm{\ell}_i + \bm{\ell}_j) \left( 1 - \sigma(\bm{\ell}_i + \bm{\ell}_j) \right) + \delta_{ij} \sum\nolimits_{k \in [n]} \sigma(\bm{\ell}_i + \bm{\ell}_k) \left( 1 - \sigma(\bm{\ell}_i + \bm{\ell}_k) \right) \\
&= P_{ij} \left( 1 - P_{ij} \right) + \delta_{ij} \sum\nolimits_{k \in [n]} P_{ik} \left( 1 - P_{ik} \right) .
\end{align*}
In Algorithm~\ref{alg:fitoddsproduct}, we provide pseudocode for computing the Jacobian matrix $J$ 
and for implementing Newton-Raphson method to compute $P$. We do not have a proof that Algorithm~\ref{alg:fitoddsproduct} always converges and produces $\bm{\ell}$ which exactly reproduces in the inut degree sequence. However, the algorithm converged on all test cases, and proving that it always converges would be an interesting future direction.

\begin{algorithm}
    \caption{Fitting the odds product model}
    \label{alg:fitoddsproduct}
    \textbf{input} graphical degree sequence $\bm{d} \in \mathbb{R}^n$, error threshold $\epsilon$ \\
    \textbf{output} symmetric matrix $P \in (0,1)^{n \times n}$ with row/column sums approximately $\bm{d}$
    \begin{algorithmic}[1] 
	\State $\bm{\ell} \gets \bm{0}$ \Comment{\textcolor{gray}{ $\bm{\ell} \in \mathbb{R}^n$ is the vector of logits, initialized to all zeros }}
	\State $P \gets \sigma\left( \bm{\ell} \bm{1}^\top +  \bm{1} \bm{\ell}^\top \right)$ \Comment{\textcolor{gray}{ $\sigma$ is the logistic function applied entrywise, {\newline \hspace*{\algorithmicindent}} \hspace{175pt} and $\bm{1}$ is the all-ones column vector of length $n$}}
	\State $\bm{\tilde{d}} \gets P \bm{1}$  \Comment{\textcolor{gray}{ degree sequence of $P$ }}
	\While{ $\norm{ \bm{\tilde{d}} - \bm{d} }_2 > \epsilon$ }
		\State $B \gets P \circ \left( \bm{1} \bm{1}^\top  - P \right)$  \Comment{\textcolor{gray}{$\circ$ is an entrywise product}}
		\State $J \gets B + \text{diag}\left(B \bm{1} \right)$ \Comment{\textcolor{gray}{ $\text{diag}$ is the diagonal matrix with the input vector along its diagonal}}
		\State $\bm{\ell} \gets \bm{\ell} - J^{-1} \left(\bm{\tilde{d}} - \bm{d} \right)$ \Comment{\textcolor{gray}{rather than inverting $J$, we solve this  linear system}}
		\State $P \gets \sigma\left( \bm{\ell} \bm{1}^\top +  \bm{1} \bm{\ell}^\top \right)$
		\State $\bm{\tilde{d}} \gets P \bm{1}$
	\EndWhile
           \State \textbf{return} $P$
    \end{algorithmic}
\end{algorithm}	

Our odds product model can be viewed as a variant of the Chung-Lu configuration model \cite{ChungLu:2002}, which is also based on degree sequence matching. However, but our model comes without a certain restriction on the maximum degree: in Chung-Lu, it is assumed that the degrees of all nodes are bounded above by the square root of the volume of the graph, that is, $\bm{d}_i \leq \sqrt{V(A)}$ for all nodes $i$. Given this restriction, each node is assigned a weight $\bm{w}_i = \bm{d}_i / \sqrt{V(A)}$, and the probability of adding edge $(i,j)$ is $P_{ij}=\bm{w}_i \bm{w}_j$. Since the weights are all in $[0,1]$, they can be interpreted as probabilities, and the probability of adding an edge between two nodes is the product of the two nodes' probabilities.

Our odds product model works similarly, but instead of a probability, for each node, there is an associated odds, that is, a value in $(0,\infty)$, and the odds of adding an edge between two nodes is the product of the two nodes' odds. There is a one-to-one-to-one relationship between probability $p \in [0,1]$, odds $o=\tfrac{p}{1-p} \in [0,\infty)$, and logit $\ell = \ln(o)\in (-\infty,+\infty)$. We outlined above how our model is based on adding logits associated with each node; since the odds is the exponentiation of the logit, the model can equally be viewed as multiplying odds associated with nodes. 


\noindent\textbf{Varying overlap in the odds product model.} We propose a simple method to control the trade-off between overlap and accuracy in matching the input graph statistics in the odds product model. 
Given the original adjacency matrix $A$ and the $P$ generated by the odds product model to match the degree sequence of $A$, we use a convex combination of $P$ and $A$. That is, we use $\tilde{P} = (1-\omega) P + \omega A$, where $0 \leq \omega \leq 1$. As $\omega$ increases to $1$, $\tilde{P}$ approaches a model which returns the original graph with high certainty; hence high $\omega$ produce $\tilde{P}$ with high overlap which closely match graph statistics, while low $\omega$ produce $\tilde{P}$ with lower overlap which may diverge from $A$ in some statistics.
Note that since $\tilde{P}$ is a convex combination of adjacency matrices with the expected degree sequence of $A$, $\tilde{P}$ also has the same expected degree sequence regardless of the value of $\omega$.

\noindent\textbf{Linear model.} 
 As an even simpler baseline, we also propose and evaluate the following model: we produce an Erd\"{o}s-R\'{e}nyi model $P$ with the same expected volume as the original graph $A$, then return a convex combination $\tilde{P}$ of $P$ and $A$. In particular, each entry of $P$ is $V(A) / n^2$, and, as with the odds product model, $\tilde{P} = (1-\omega) P + \omega A$, where $0 \leq \omega \leq 1$. This model can alternatively be seen as producing a $\tilde{P}$ by lowering each entry of $A$ which is $1$ to some probability $\alpha$, and raising each entry of $A$ which is $0$ to a probability $\beta$, with $\alpha \geq \beta$, such that the volume is conserved. 

\section{Experimental Results}
\label{sec:exp}

We now present our evaluations of different edge independent graph generative models in terms of the tradeoff achieved between overlap and performance in generating graphs with similar key statistics to an input network. 
These experiments highlight the strengths and limitations of each model, as well as the overall limitations of this class, as established by our theoretical bounds.


\subsection{Methods}
We compare our proposed models from Section \ref{sec:proposed} with a number of existing models described below 
\begin{enumerate}
\item \textbf{CELL \cite{rendsburgnetgan}} (Cross-Entropy Low-rank Logits) An alternative to the popular NetGAN method \cite{bojchevski2018netgan}
which strips the proposed architecture of deep leaning components and achieves comparable performance
in significantly less time, via a low-rank approximation approach. To control overlap, we follow the approach of the original paper, halting training once the generated graph exceeds a specified overlap threshold with the input graph. We set the rank parameter to a value that allows us to get up to 75\% overlap (typical values are 16 and 32).
\item \textbf{TSVD} (Truncated Singular Value Decomposition) A classic spectral method which computes a rank-$k$ approximation of the adjacency matrix using truncated SVD. As in \cite{SeshadhriSharmaStolman:2020}, the resulting matrix is clipped to [0,1] to yield $P$. Overlap is controlled by varying $k$.
\item \textbf{CCOP} (Convex Combination Odds Product) The odds product model as of Sec.~\ref{sec:proposed} with overlap controlled by taking a convex combination of $P$ and the input  adjacency matrix $A$.
\item \textbf{HDOP} (Highest Degree Odds Product) The odds product model, but with overlap controlled by fixing the edges adjacency to a certain number of the highest degree nodes. See Appendix for results on other variants, e.g., where some number of dense subgraphs are fixed. 
\item \textbf{Linear} The convex combination between the input adjacency matrix and an Erd\"{o}s-R\'enyi graph, as described in Sec. \ref{sec:proposed}, with overlap controlled by varying the $\omega$ parameter. 
\end{enumerate}

CCOP, HDOP, and Linear all produce edge probability matrices $P$ with the same volume, $V(G)$, in expectation as the original adjacency matrix. For TSVD, letting $L$ be the low-rank approximation of the adjacency matrix, we learn a scalar shift parameter $\sigma$ using Newton's method
such that $P = \max(0,\min(1,L+\sigma))$ has volume $V(G)$.
We then generate new networks from the edge independent distribution $\mathcal{G}(P)$ (Def. \ref{def:ei}). 
For CELL, we follow the authors' approach of generating $V(G)$ edges without replacement - an edge $(i,j)$ is added with probability proportional to $P_{ij}$).

We sample 5 networks from each distribution and report the average for every statistic.

  
\subsection{Datasets and network statistics}\label{sec:dsets_net_stats}
For evaluation, we use the following seven popular datasets with varied structure, from triangle-rich social networks to planar road networks:
\begin{enumerate}
\item \textsc{PolBlogs}: A collection of political blogs and the links between them.
\item \textsc{Citeseer}: A collection of papers from six scientific categories and the citations among them.
\item \textsc{Cora}: A collection of scientific publications and the citations among them.
\item \textsc{Road-Minnesota}: A road network from the state of Minnesota. Each intersection is a node.
\item \textsc{Web-Edu}: A web-graph drawn from educational institutions.
\item \textsc{PPI}: A subgraph of the PPI network for Homo Sapiens. Vertices represent proteins and edges represent interactions.
\item \textsc{Facebook}: A union of ego networks of Facebook users.
\end{enumerate}
See Table \ref{sample-table} for statistics about the networks. We treat all networks as binary, in that we set all non-zero weights to $1$, and undirected, in that if edge $(i,j)$ appears in the network, we also include edge $(j,i)$ . Also, we keep only the largest connected component of each network.

\begin{table}
  \caption{Dataset summaries}
  \label{sample-table}
  \centering
  \begin{tabular}{lrrr}
    \toprule
    Dataset  & Nodes           & Edges       & Triangles \\
    \midrule
    \textsc{PolBlogs}~\cite{adamic2005political} & 1,222 & 33,428 & 101,043\\
    \textsc{Citeseer}~\cite{sen2008collective} & 2,110  & 7,336 & 1,083 \\
    \textsc{Cora}~\cite{sen2008collective}     & 2,485 & 10,138 & 1,558 \\
    \textsc{Road-Minnesota}~\cite{nr} & 2,640 & 6,604 & 53 \\
    \textsc{Web-Edu}~\cite{gleich2004fast} & 3,031 & 12,948 & 10,058 \\
    \textsc{PPI}~\cite{stark2010biogrid} & 3,852 & 75,682 & 91,461 \\
    \textsc{Facebook}~\cite{leskovec2012learning} & 4,039 & 176,468 & 1,612,010 \\
    \bottomrule
  \end{tabular}
\end{table}

We evaluate performance in matching the following key network statistics:
\begin{enumerate}
\item Pearson correlation of the degree sequences of the input and the generated network.
\item Maximum degree over all nodes.
\item Exponent of a power-law distribution fit to the degree sequence.
\item Assortativity, a measure that captures the preference of nodes to attach to others with similar degree (ranging from -1 to 1).
\item Pearson correlation of the triangle sequence (number of triangles a node participates in).
\item Total triangle count (analyzed theoretically in Thm. \ref{thm:tri}).
\item Global clustering coefficient (defined in Def. \ref{def:cc} and analyzed theoretically in Thm. \ref{thm:cc}).
\item Characteristic path length (average path length between any two nodes).
\end{enumerate} 

\subsection{Results}
The theoretical results from Section \ref{sec:impossibility} highlight a key weakness of edge independent generative
models: they cannot generate many triangles (or other higher-order locally dense areas), without having high overlap and thus not generating a diversity of graphs. We observe that these theoretical findings hold in practice -- generally speaking, all models tested tend to significantly underestimate triangle count and global clustering coefficient, as well as inaccurately match the triangle degree sequence, when overlap is low.
See Figures~\ref{fig:polblogs_metrics}, \ref{fig:citeseer_metrics}, \ref{fig:cora_metrics}, \ref{fig:road-minnesota_metrics}, \ref{fig:web-edu_metrics}, \ref{fig:ppi_metrics}, and \ref{fig:facebook_metrics} for results on the tested networks.
As overlap increases, performance in reconstructing these metrics does as well, as expected.

All methods are able to capture certain network characteristics accurately, even at low overlap. 
Even for a relatively small overlap (less than 0.2), the CCOP  and HDOP methods accurately capture the degree sequences of the true networks
(as they are designed to do). 
These methods, especially HDOP which fixes edges from high degree nodes, often outperform more sophisticated methods like CELL in terms of triangle density and triangle degree sequence correlation. On the other hand, CELL seems to do a somewhat better job capturing global features, like
the characteristic path length. TSVD provides a fair compromise --  it performs better than CELL in terms
of degree sequence and triangle counts, but worse in terms of characteristic path length. In general, it is the method that gives the best results when the overlap is extremely small, appearing to be less sensitive to the variation in overlap.

Broadly speaking, all methods do reasonably well in matching the power-law  degree distribution of the networks, even when they do not match the actual degree sequence closely. With the exception of \textsc{Web-Edu}, they tend to underestimate the characteristic path length. This is perhaps not surprising due to the independent random edge connections, however it would be interesting to understand more theoretically. 

\begin{figure}
\centering
\includegraphics[width=1.0\linewidth]{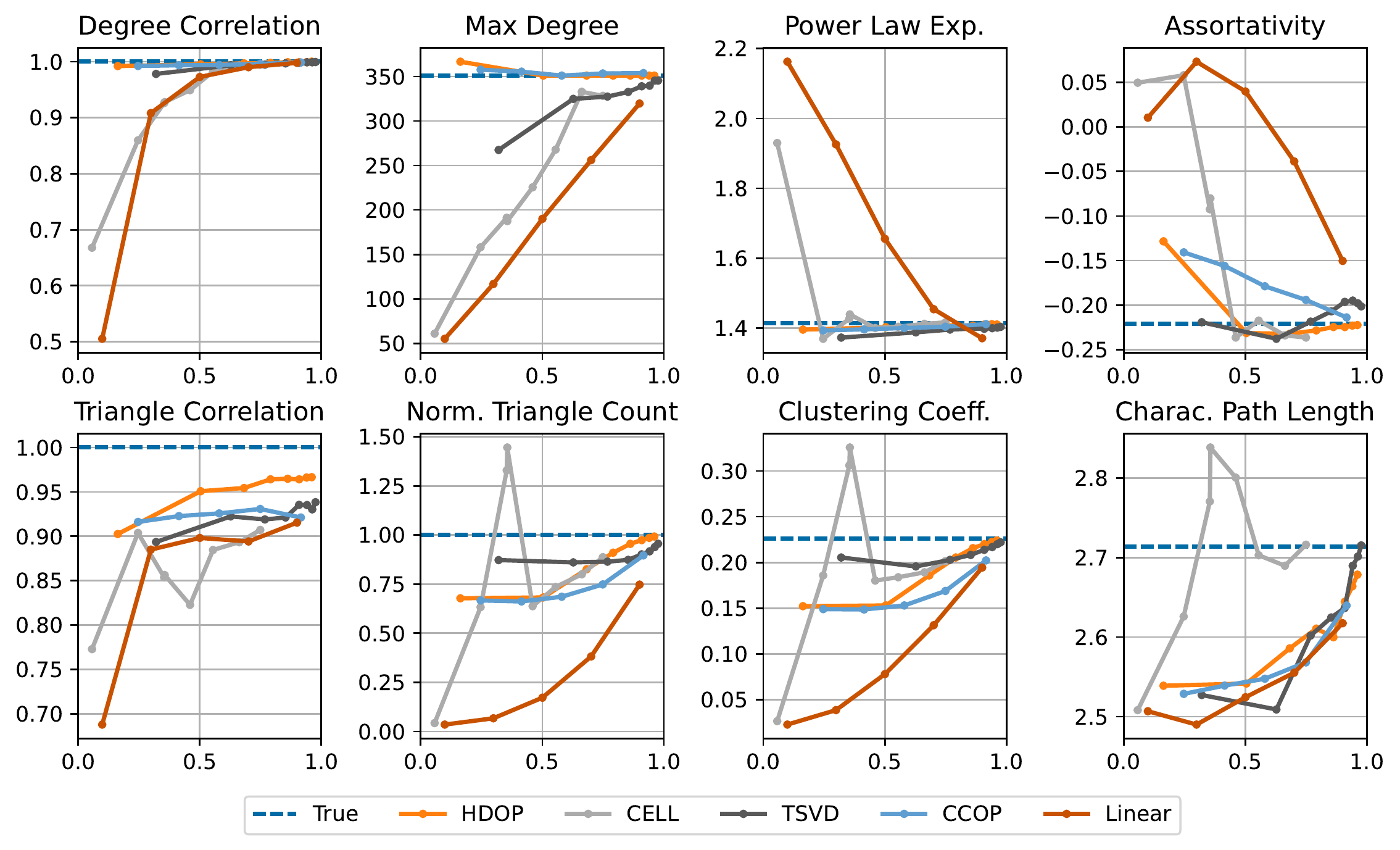}
\caption{Metrics for \textsc{PolBlogs}.}
\label{fig:polblogs_metrics}
\end{figure}

\begin{figure}
\centering
\includegraphics[width=1.0\linewidth]{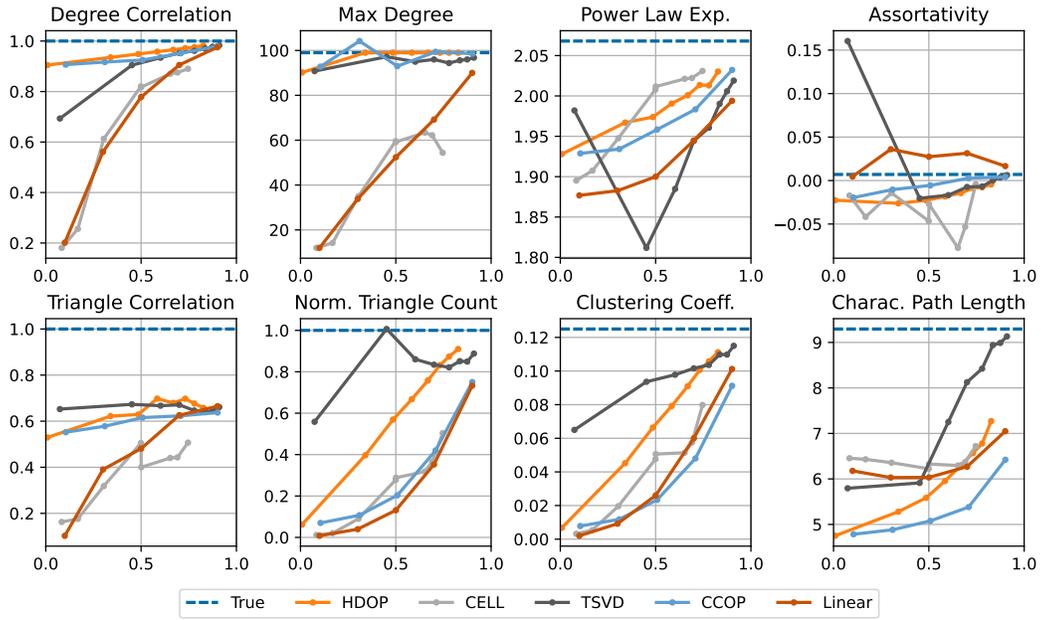}
\caption{Metrics for \textsc{citeseer}.}
\label{fig:citeseer_metrics}
\end{figure}

\begin{figure}
\centering
\includegraphics[width=1.0\linewidth]{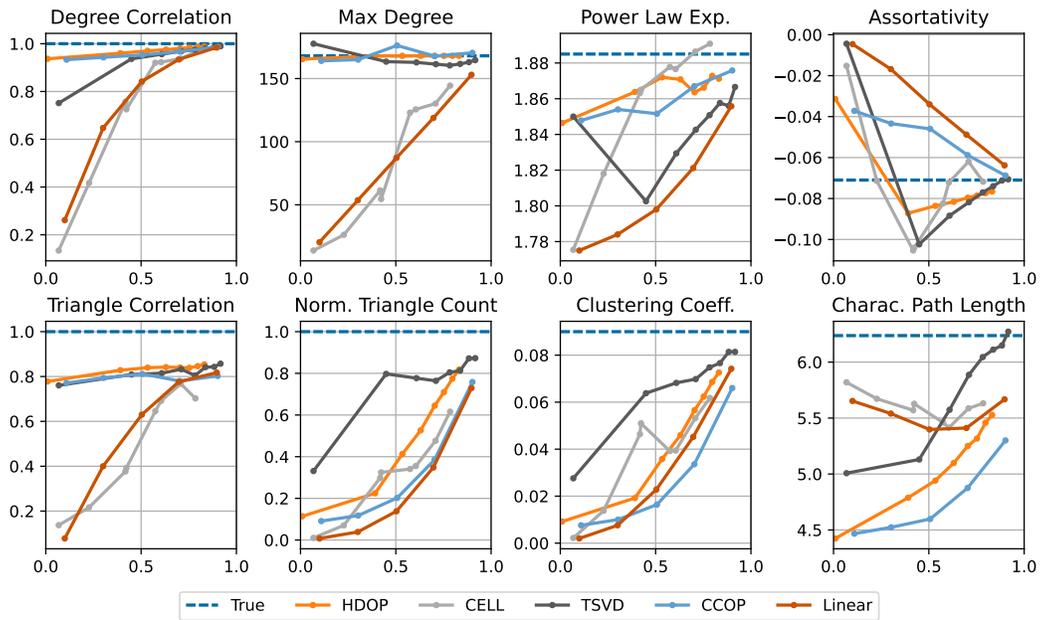}
\caption{Metrics for \textsc{cora}.}
\label{fig:cora_metrics}
\end{figure}

\begin{figure}
\centering
\includegraphics[width=1.0\linewidth]{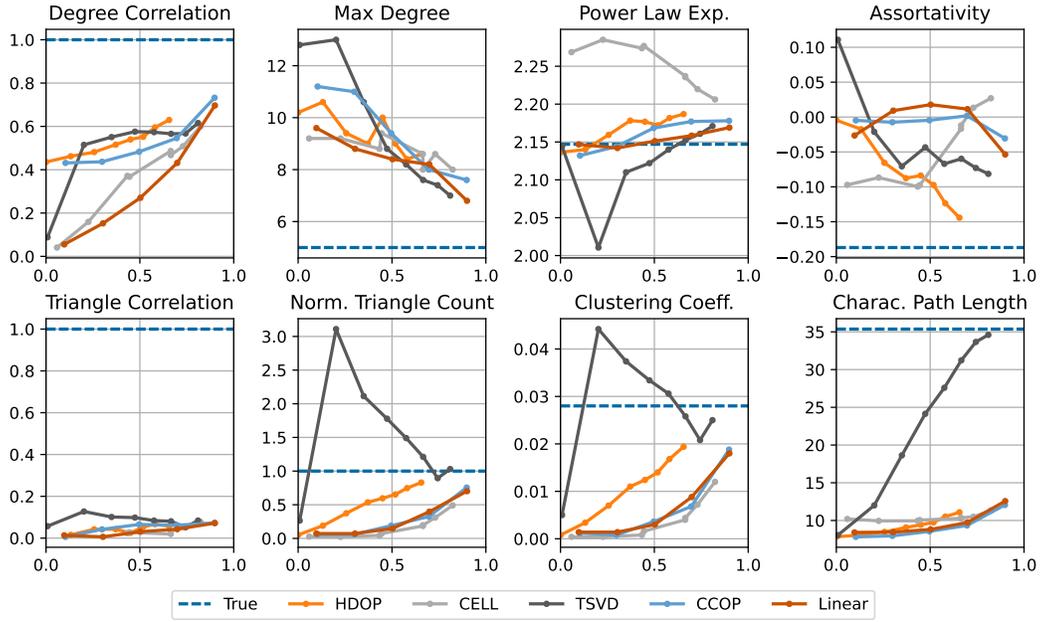}
\caption{Metrics for \textsc{road-minnesota}.}
\label{fig:road-minnesota_metrics}
\end{figure}

\begin{figure}
\centering
\includegraphics[width=1.0\linewidth]{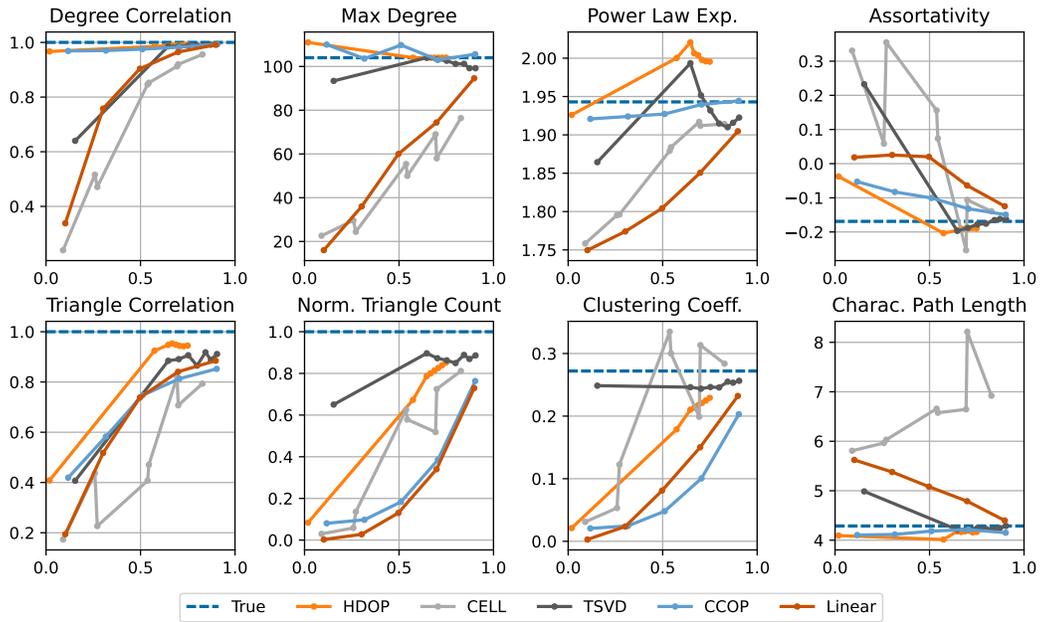}
\caption{Metrics for \textsc{web-edu}.}
\label{fig:web-edu_metrics}
\end{figure}

\begin{figure}
\centering
\includegraphics[width=1.0\linewidth]{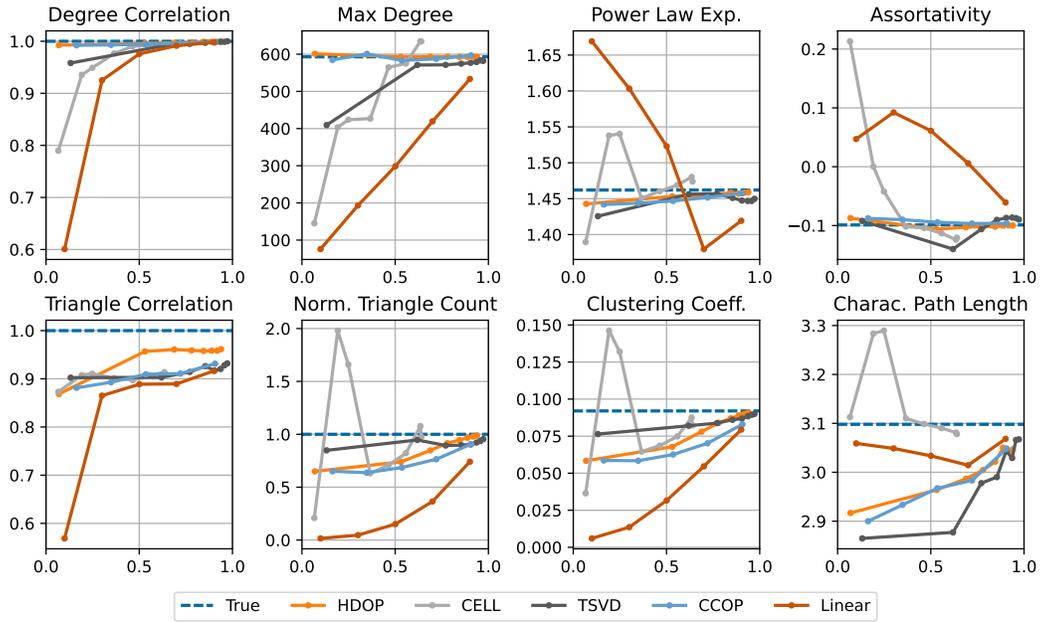}
\caption{Metrics for \textsc{PPI}}
\label{fig:ppi_metrics}
\end{figure}

\begin{figure}
\centering
\includegraphics[width=1.0\linewidth]{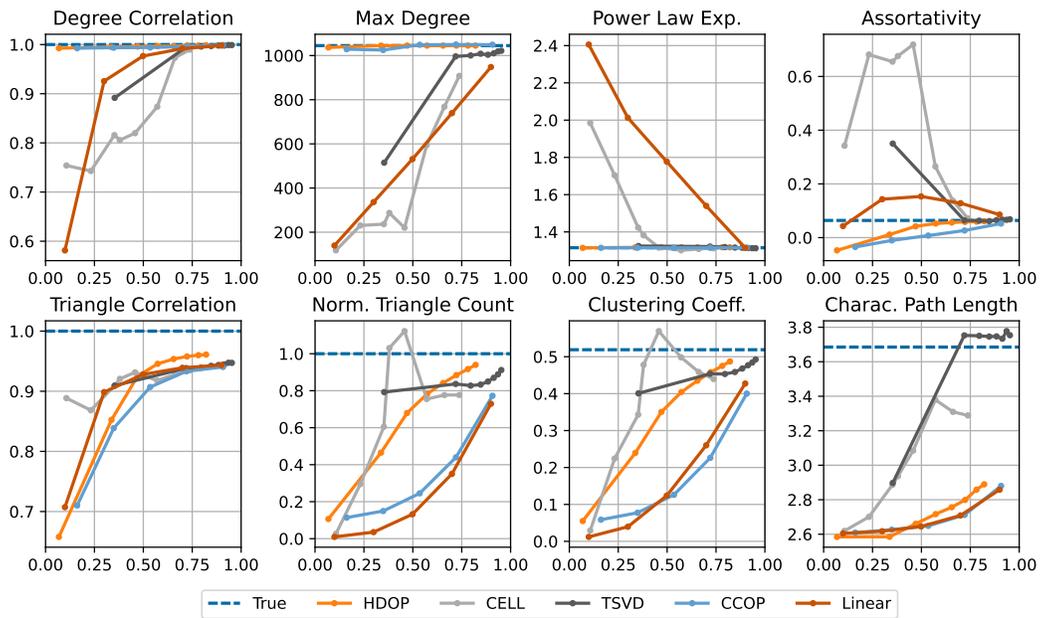}
\caption{Metrics for \textsc{Facebook}.}
\label{fig:facebook_metrics}
\end{figure}

\subsection{Code for Reproducing Results}
Code is available at \url{https://github.com/konsotirop/edge_independent_models}. Our implementation of the methods we introduce is written in Python and uses the NumPy~\cite{harris2020array} and SciPy~\cite{2020SciPy-NMeth} packages. Additionally, to calculate the various graph metrics, we use the following packages: powerlaw~\cite{alstott2014powerlaw} and MACE (MAximal Clique Enumerator)~\cite{takeaki2012implementation}.

\section{Conclusion}
Our theoretical results prove limitations on the ability of any edge independent graph generative model to produce networks that match the high triangle densities of real-world graphs, while still generating a diverse set of networks, with low model overlap. These results match empirical findings that popular edge independent models indeed systematically underestimate triangle density, clustering coefficient, and related measures. Despite the popularity of edge independent models, many non-independent models, such as graph RNNs \cite{YouYingRen:2018} have been proposed. An interesting future direction would be to study the representative power and limitations of such models, giving general theoretical results that provide a foundation for the study of graph generative models.

\clearpage
\bibliographystyle{plain}
\bibliography{neurips_2021}

\clearpage
\appendix

\section{Exact Embeddings in the CELL Model}

Recently, Rendsburg et al \cite{rendsburgnetgan} propose the CELL graph generator: a major simplification of the NetGAN algorithm for \cite{bojchevski2018netgan}, which gives comparable performance, much faster runtimes, and helps clarify the key components of  the generator. CELL uses a simple low-rank factorization model. Here we prove that, when its rank parameter is $k$, the CELL model can `memorize' any graph with degree bounded by $O(k)$. This allows the model to trivially produce distributions with very high expected triangle densities. However, as our main results show, this inherently requires memorization and high overlap. 

Our result can be viewed as an extension of the results of \cite{ChanpuriyaMuscoSotiropoulos:2020}, which considers a different edge independent model. The proof techniques are very similar.
Interestingly, our result seem to indicate that the good generalization of CELL in link prediction tests may mostly be due to the fact that this model is not fully optimized, to the point of memorizing the input. 
%



\noindent \textbf{The CELL Model.} We first describe the CELL model introduced in \cite{rendsburgnetgan}.
\begin{enumerate}
\item Given a graph adjacency matrix $A \in \{0,1\}^{n \times n}$, let
\begin{align}\label{eq:cell}
W^\star = \min_{\substack{W \in \R^{n \times n} \\\rank(W) \le k}}\quad \sum_{i,j=1}^n A_{ij} \log \sigma_{rows}(W)_{ij},
\end{align}
where $\sigma_{rows}(W)$ applies a softmax rowwise to $W$ -- ensuring that each row of $\sigma_{rows}(W)$ sums to $1$. 
\item Let $P^\star = \sigma_{rows}(W^\star)$ and let $\pi \in \R^n$ be the eigenvector satisfying $\pi^T P^\star = \pi^T$.
\item Let $P = \max(diag(\pi) P^*, (diag(\pi) P^\star)^T)$.
\item Generate $G \sim G(P)$.
\end{enumerate}
Note that the last step described above is slightly different than the approach taken in CELL. Rather than use an edge-independent model as in Def. \ref{def:ei}, they form $G$ by sampling edges without replacement, with probability proportional to the entries in $P$. They also insure that at least one edge is sampled adjacent to every node. However, this distinction is minor.

\noindent \textbf{Unconstrained Optimum.}
We first show that, if the rank constraint in \eqref{eq:cell} is removed, then the optimal $W^\star$ has $\sigma_{rows}(W^\star) = P^\star =  D^{-1} A$, where $D$ is the diagonal degree matrix. At this minimum, we can check that $\pi_i = d_i$, the degree of the $i^{th}$ node, and thus $diag(\pi)  = D$ and $P = A$. That is, the model simply outputs the input graph with probability $1$.  
\begin{theorem}[CELL Optimum]\label{thm:piecewise}
The unconstrained CELL objective function \eqref{eq:cell} is minimized when $\sigma_{rows}(W) = D^{-1} A$. At this minimum, the edge independent model $P$ is simply $A$. That is, the model just returns the input graph with probability $1$.
\end{theorem}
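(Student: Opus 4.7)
The plan is to exploit the fact that $\sigma_{\text{rows}}$ couples only entries within a single row: the $i$-th row of $P = \sigma_{\text{rows}}(W)$ depends only on the $i$-th row of $W$, and, as $W$ ranges over $\R^{n \times n}$ with no rank constraint, the $i$-th row of $P$ can independently be any strictly positive probability vector in the open simplex $\Delta^{n-1}$. So the objective (which I read as an argmax of a log-likelihood in its natural form; with the minus sign as literally written it is unbounded below) decouples into $n$ independent subproblems of maximizing $\sum_{j} A_{ij} \log p_{ij}$ over $(p_{i1},\ldots,p_{in}) \in \Delta^{n-1}$.

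First I would solve each per-row subproblem. Setting up the Lagrangian with multiplier $\mu_i$ for the constraint $\sum_j p_{ij} = 1$ and equating derivatives to zero gives $A_{ij}/p_{ij} = \mu_i$, hence $p_{ij} \propto A_{ij}$, which normalizes to $p_{ij} = A_{ij}/d_i$. Equivalently, one can recognize each per-row problem as minimizing the cross-entropy $H(A_i/d_i,\, p_i)$ between the empirical neighbor distribution and $p_i$, which is uniquely optimized at $p_i = A_i/d_i$. Summing over rows gives $P^\star = D^{-1}A$. (The optimum lies on the boundary of $\Delta^{n-1}$ wherever a row of $A$ has a zero entry, and is attained only in the limit as the corresponding entry of $W$ goes to $-\infty$; this is the standard caveat for unconstrained softmax maximum-likelihood estimation and does not affect the value or identity of $P^\star$.)

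Second, I would verify that $\pi = (d_1,\ldots,d_n)^T$ is a left eigenvector of $P^\star$ with eigenvalue $1$. Using symmetry of $A$:
\begin{align*}
(\pi^T P^\star)_j \;=\; \sum_{i=1}^n d_i \cdot \frac{A_{ij}}{d_i} \;=\; \sum_{i=1}^n A_{ij} \;=\; d_j \;=\; \pi_j .
\end{align*}
Hence $diag(\pi)\, P^\star = D \cdot D^{-1} A = A$, which is symmetric, so
\begin{align*}
P \;=\; \max\!\bigl(diag(\pi)\, P^\star,\; (diag(\pi)\, P^\star)^T\bigr) \;=\; \max(A, A^T) \;=\; A
\end{align*}
entrywise, and therefore $G \sim \mathcal{G}(P)$ returns $A$ with probability $1$.

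The main obstacle, such as it is, is uniqueness: the CELL construction refers to \emph{the} stationary vector $\pi$, which presumes that $P^\star$ has a unique left eigenvector for eigenvalue $1$ (up to scaling). For the typical case where the input graph on which $A$ is built is connected, the random walk transition matrix $D^{-1}A$ is irreducible and Perron--Frobenius yields uniqueness, so our candidate $\pi = d$ is forced to be the stationary distribution. If $A$ is disconnected, the degree vector remains \emph{a} valid stationary distribution on each component and the argument goes through componentwise; this is a notational subtlety rather than a genuine difficulty.
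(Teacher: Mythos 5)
Your proof is correct and starts from the same observation as the paper's --- the objective decouples row by row, and each row of $\sigma_{rows}(W)$ ranges freely over the probability simplex --- but the core optimization step is handled by a genuinely different argument. The paper shows $w_i = a_i/d_i$ is optimal via an exchange argument: it supposes some other optimizer exists, finds an index where mass is deficient and one where it is in surplus, and shows that transferring mass improves the objective using concavity of $\log$. You instead invoke the standard characterization of the per-row problem as cross-entropy minimization (equivalently, Lagrange/KKT stationarity giving $p_{ij} \propto A_{ij}$), which is cleaner and gives uniqueness of $P^\star = D^{-1}A$ directly; the only nitpick is that the stationarity computation as written covers only indices with $A_{ij}=1$, so you should say explicitly (as your cross-entropy phrasing implicitly does) that the optimum places zero mass where $A_{ij}=0$. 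You also do two things the paper's proof omits: you verify the second sentence of the theorem --- that $\pi_i = d_i$ is the stationary vector of $D^{-1}A$, hence $\mathrm{diag}(\pi)P^\star = A$ and $P = \max(A,A^T) = A$ --- which the paper only asserts in the surrounding text as something one ``can check''; and you correctly flag that the objective as literally written (a $\min$ of the log-likelihood rather than a $\max$) is unbounded below, and that the intended optimum is attained only in the limit as entries of $W$ tend to $-\infty$, a point the paper confronts only later in Theorem \ref{thm:exact}. Your Perron--Frobenius remark on uniqueness of $\pi$ is a sensible addition given that the experiments restrict to the largest connected component. Overall your argument is at least as complete as the paper's.
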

\begin{proof}
It suffices to consider the $i^{th}$ row of $W$ for each $i \in [n]$, since the objective function of \eqref{eq:cell} breaks down rowwise. Let $w_i,a_i \in \R^n$ be the $i^{th}$ rows of $\sigma_{rows}(W)$ and $A$ respectively. Note that $w_i$ is a probability vector, with $w_i(j) \ge 0$ for all $j$ and $\sum_{j=1}^n w_i(j) = 1$.

We seek to minimize $\sum_{j=1}^n A_{ij} \log [w_i(j)].$ We need to show that this objective is minimized when $w_i = 1/d_i \cdot a_i$ -- i.e., when $w_i$ places mass $1/d_i$ at each nonzero entry in $a_i$ $1/d_i \cdot a_i$ is the $i^{th}$ row of $D^{-1}A$, so applying this argument to all $i$ gives that $\sigma_{rows}(W) = D^{-1} A$ is the overall minimizer. Assume for the sake of contradiction that there is some other minimizer $w^\star \neq 1/d_i \cdot a_i$. Since $\sum_{j=1}^n w^\star(j) = 1$, we must have $w^\star(j) \le 1/d_i$ for some $j$ where $a_i = 1$. In turn, there must be some $j'$ with either (1) $w^\star(j') \ge 0$ and $a_i(j') = 0$ or (2) $w^\star(j') \ge 1/d_i$ and $a_i(j') = 1$. In case (1), clearly moving $w^\star(j')$ mass from $j'$ to $j$ will decrease the objective function. In case (2), due to the concavity of the log function, moving $w^\star(j') - 1/d_i$ mass from $j'$ to $j$ will also decrease the objective function. Thus, $w^\star$ cannot be a minimizer, completing the proof.
%
\end{proof}

\noindent \textbf{Rank-Constrained Optimum.}
We next show that the unconstrained optimum of $\sigma_{rows}(W) = D^{-1} A$, which leads to CELL memorizing the input graph (Thm. \ref{thm:piecewise}) can be achieved even with the rank constraint of \eqref{eq:cell}, as long as $k \ge 2\Delta+1$, where $\Delta$ is the maximum degree of the input graph. 
\begin{theorem}[CELL Exact Factorization]\label{thm:exact}
If $A$ is an adjacency matrix with maximum degree $\Delta$, there is a rank $2\Delta+1$ matrix $W$ with 
$$\sigma_{rows}(W) = D^{-1}A + E$$
where $\norm{E}_2 \le \epsilon$. Note that  the rank of $W$ does not depend on $\epsilon$, and so we can drive $\epsilon \rightarrow 0$ and find a rank-$2\Delta+1$ $W$ which is arbitrarily close to minimizing \eqref{eq:cell} and thus produces $P$ which is arbitrarily close to $A$.
\end{theorem}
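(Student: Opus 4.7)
My plan is to exploit that row-wise softmax is invariant under adding a row-dependent constant, so it suffices to find a low-rank $W$ whose $i$-th row takes some common value on $N_i$ and a much smaller value elsewhere. I will achieve this by a polynomial construction in the column index. Labeling the vertices by $1,\dots,n$, for each row $i$ I set
$$f_i(v) = \prod_{u \in N_i}(v-u)^2, \qquad W_{iv} = -M \cdot f_i(v),$$
where $M>0$ is a scaling parameter to be chosen later. Then $f_i$ is a polynomial in $v$ of degree $2d_i \le 2\Delta$, vanishes exactly at $v \in N_i$, and is $\ge 1$ at every other integer $v \in [n]$ because $(v-u)^2$ is a positive integer whenever $v \ne u$.

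For the rank bound, every row of $W$ lies in the $(2\Delta+1)$-dimensional span of the column vectors $\bs{\mu}_k = (1^k,2^k,\dots,n^k)^\top$ for $k=0,\dots,2\Delta$, so $\rank(W) \le 2\Delta+1$ independently of $M$; explicitly, expanding $f_i(v) = \sum_k a_k^{(i)} v^k$ and gathering, $W = -M \sum_{k=0}^{2\Delta} \bs{a}_k \bs{\mu}_k^\top$ where $\bs{a}_k = (a_k^{(1)},\dots,a_k^{(n)})^\top$. For the approximation bound, for each row $i$ with $d_i \ge 1$ the softmax numerator $\exp(W_{iv})$ equals $1$ on $N_i$ and is at most $\exp(-M)$ off $N_i$, so the denominator is $d_i+\eta_i$ with $0 \le \eta_i \le n\exp(-M)$. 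A short computation (using $|1/(d_i+\eta_i) - 1/d_i| \le \eta_i/d_i^2$ on $N_i$ and $\exp(-Mf_i(v))/d_i \le \exp(-M)$ off $N_i$) yields $|E_{iv}| \le n\exp(-M)$ entrywise, and hence $\norm{E}_2 \le \norm{E}_F \le n^2\exp(-M)$. Choosing $M \ge 2\log n + \log(1/\epsilon)$ makes $\norm{E}_2 \le \epsilon$ without affecting the rank bound, finishing the theorem.

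The main conceptual obstacle is discovering the polynomial encoding. A first attempt, such as edge-coloring into $\Delta+1$ matchings by Vizing's theorem and summing a rank-$O(1)$ quadratic-peak matrix per color, fails: sums of quadratics in the column index are again quadratics, so the aggregated $W$ has only a single averaged peak per row rather than $\Delta$ separate peaks, and so interference across colors destroys the target softmax. The crucial observation that sidesteps this is that one polynomial in $v$ of degree $2\Delta$ can have $\Delta$ distinct roots, so all neighbors of vertex $i$ can be recorded jointly in a single row via $\prod_{u \in N_i}(v-u)^2$, at a total rank cost of $2\Delta+1$ — the dimension of the monomial basis up to degree $2\Delta$.
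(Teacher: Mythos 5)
Your proof is correct, and while it shares the paper's core strategy---encode each row's neighborhood as a degree-$2\Delta$ polynomial in the column index, bound the rank by $2\Delta+1$ via the monomial/Vandermonde basis, then scale and apply the row-wise softmax---your realization of the polynomial is genuinely different and cleaner. The paper builds, for each row, a polynomial with roots at perturbed positions $t_j \pm \epsilon w_j$ straddling each neighbor, and must tune the weights $w_j = 1/\prod_{j'\neq j}(t_j - t_{j'})$ so that a limiting argument equalizes the polynomial's values at all neighbors; this equalization is the most delicate step of their proof. Your choice $W_{iv} = -M\prod_{u\in N_i}(v-u)^2$ sidesteps it entirely: the polynomial is \emph{exactly} zero at every neighbor (so the softmax numerators there are all exactly $1$, automatically equal) and at most $-M$ at every non-neighbor because squared differences of distinct integers are at least $1$. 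This buys you two things the paper does not provide: no limit over root perturbations is needed, and you get an explicit quantitative bound $\norm{E}_2 \le n^2 e^{-M}$ with a concrete choice $M \ge 2\log n + \log(1/\epsilon)$, whereas the paper only argues qualitatively that scaling $W$ by a large constant drives the softmax to $D^{-1}A$. Your entrywise error computation is right (on $N_i$ the deviation is $\eta_i/(d_i(d_i+\eta_i)) \le n e^{-M}$ for $d_i \ge 1$; off $N_i$ it is at most $e^{-M}$), and the rank bound is immediate since $W$ is a sum of $2\Delta+1$ rank-one terms $\bs{a}_k\bs{\mu}_k^\top$. The only shared blemish is that rows with $d_i = 0$ make $D^{-1}A$ undefined; the paper ignores this as well, and your construction degrades gracefully there (a constant row, hence uniform softmax).
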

%
%
%
\begin{proof}

Let $V \in \R^{n \times 2\Delta+1}$ be the Vandermonde matrix with $V_{t,j} = t^{j-1}$. For any $x \in \R^{2\Delta+1}$, $[Vx](t) = \sum_{j = 1}^{2\Delta+1} x({j}) \cdot t^{j-1}$. That is: $Vx$ is a degree $2\Delta$  polynomial evaluated at the integers $t = 1,\ldots,n$.

Let  $a_i$ be the $i^{th}$ row of $A$. Note that $a_i$ has at most $\Delta$ nonzeros whose positions we denote by $t_1,t_2,\ldots,t_{d_i}$.
To prove the theorem, for each row $a_i$, we will construct a polynomial $Vx_i$ which has the \emph{same positive value} at each $t_1,t_2,\ldots,t_{d_i}$ and is negative all all other integers $t$. Then, we will let $X \in \R^{2\Delta +1 \times n}$ be the matrix with columns $x_{i}$ and $W = (VX)^T $. Note that $\rank(W) \le 2\Delta+1$, and $W$ is equal to a fixed positive value whenever A is one and negative whenever it is zero. If we scale $W$ by a very large number (which does not affect its rank), we will have $\sigma_{rows}(W)$ arbitrarily close to $D^{-1} A$, since the rowwise softmax will place equal probability on each positive entry in row $i$ of $W$ and arbitrarily close to $0$ probability on each negative. So the row will exactly have $d_i$ at the nonzero entries of $a_i$, entries each equal to $1/d_i$.

It remains to exhibit the polynomial need to construct $W$. We start by constructing a polynomial of degree $2\Delta$ that is positive on each nonzero position $t_1,t_2,\ldots,t_{d_i}$ of $a_i$ and negative at all other indices. Later we will modify this polynomial to have the same positive value at each nonzero position of $a_i$. 
 Let $r_{j,L}$ and $r_{j,U}$ be any values with $t_{j} -1 < r_{j,L} < t_j$ and $t_{j} < r_{j,U} < t_{j} +1$. Consider the polynomial with roots at each $r_{j,L}$ and $r_{j,U}$ -- this polynomial has $2 d_i \le 2\Delta$ roots and so degree at most $2\Delta$. It will flip signs just at each  $r_{j,L}$ and $r_{j,U}$, and will in fact have the same sign at  $t_1,t_2,\ldots,t_{d_i}$ (either positive or negative). Simply negativing the coefficients we can ensure that this sign is positive, while it is negative at all other indices, giving the result. 

The polynomial above can be written as $p(t) = \prod_{j=1}^{d_i} (t-r_{j,U}) (t-r_{j,L})$. Choose $r_{j,U} = t_j + \epsilon w_j$ and $r_{j,L} = t_j - \epsilon w_j$, where $\epsilon$ is arbitrarily small and $w_j$ is a weight chosen specifically for $t_j$ which we'll set later. We have for any $k = 1,\cdots, d_i$,
\begin{align*}
\lim_{\epsilon \rightarrow 0} \frac{p(t_k)}{\epsilon^2} &= \lim_{\epsilon \rightarrow 0} \frac{\prod_{j=1}^\Delta (t_k-t_j+\epsilon w_j) (t_k-t_j - \epsilon w_j)}{\epsilon^2}\\
& =  \lim_{\epsilon \rightarrow 0} \frac{- \epsilon^2 w_k^2 \cdot \prod_{j\neq k} (t_k - t_j)^2}{\epsilon^2}\\
& = -w_k^2 \cdot \prod_{j\neq k} (t_k - t_j)^2.
\end{align*}
This, if we set $w_k = \frac{1}{\prod_{j\neq k} (t_k - t_j)}$, in the limit as $\epsilon \rightarrow 0$ we will have $p(t_k)/\epsilon^2 = -1$. If we negate and scale the polynomially appropriately (which doesn't change its degree) we will have $p(t_k)$ arbitrarily close to one for each nonzero index $t_k$, and negative for each zero index. This gives the theorem.
\end{proof}

\end{document}